\documentclass[11pt]{article}
\pdfoutput=1

\usepackage{lastpage}
\usepackage[utf8]{inputenc} % allow utf-8 input
\usepackage[T1]{fontenc}

%\firstpageno{1}

\usepackage{geometry}
\geometry{top=1in,bottom=1.3in,left=1.2in,right=1.2in}

\usepackage{amssymb,amsmath,thmtools,amsfonts,amsthm}
\usepackage{graphicx}
\usepackage{url}
\usepackage{setspace}
\usepackage[pdftex,bookmarksnumbered,bookmarksopen,
colorlinks,citecolor=black,linkcolor=black,urlcolor=black]{hyperref}
\usepackage{xcolor}
\usepackage{todonotes}
% Recommended, but optional, packages for figures and better typesetting:
\usepackage{microtype}
\usepackage{graphicx}
\usepackage{subfigure}
\usepackage{booktabs} % for professional tables

% Attempt to make hyperref and algorithmic work together better:

\usepackage{colortbl}

\definecolor{maroon}{cmyk}{0,0.87,0.68,0.32}

\usepackage{times}
\usepackage{cleveref}
\usepackage{graphicx}
\usepackage{natbib}

\usepackage{booktabs}
\usepackage{url}

\newcommand \reals {\mathbb{R}}
\newcommand \integers {\mathbb{Z}}
\newcommand \expect {\operatorname*{\mathbb{E}}}
\newcommand \prob {\operatorname*{Pr}}
\newcommand \norm [1]{\left\Vert#1\right\Vert}
\newcommand \argmax {\operatorname*{argmax}}

\newcommand \tv {\operatorname{TV}}
\newcommand \ball [1]{\mathcal{B}_{#1}}
\newcommand \tvball {\ball{\text{TV}}}
\newcommand \normal {\mathcal{N}}
\newcommand \uniform {\mathcal{U}}
\newcommand \ind {\operatorname{\mathbb{I}}}
\newcommand \vol {\operatorname{Vol}}

\newcommand \cD {\mathcal{D}}
\newcommand \cA {\mathcal{A}}
\newcommand \cY {\mathcal{Y}}
\newcommand \cS {\mathcal{S}}

\newcommand \bQ {\mathbf{Q}}

\newcommand{\comment}[1]{}

\newtheorem{theorem}{Theorem}[section]
\newtheorem{lemma}[theorem]{Lemma}

\newtheorem{corollary}[theorem]{Corollary}

\newtheorem{definition}{Definition}

\title{Random Smoothing Might be Unable to Certify $\ell_\infty$ Robustness for High-Dimensional Images}

\author{
Avrim Blum \\ TTIC  \\ \small{avrim@ttic.edu} \and
Travis Dick \\ University of Pennsylvania \\ \small{tbd@seas.upenn.edu} \and
Naren Manoj \\ TTIC \\
\small{nsm@ttic.edu} \and
Hongyang Zhang\thanks{Corresponding author.} \\ TTIC \\
\small{hongyanz@ttic.edu}
}

\date{}

\begin{document}
\maketitle

\begin{abstract}
We show a hardness result for random smoothing to achieve certified
  adversarial robustness against attacks in the $\ell_p$ ball of radius
  $\epsilon$ when $p>2$. Although random smoothing has been well understood for
  the $\ell_2$ case using the Gaussian distribution, much remains unknown
  concerning the existence of a noise distribution that works for the case of
  $p>2$. This has been posed as an open problem by \citet{cohen2019certified}
  and includes many significant paradigms such as the $\ell_\infty$ threat
  model. In this work, we show that \emph{any}
  noise distribution $\cD$ over $\reals^d$ that provides $\ell_p$ robustness for all base classifiers with $p>2$ must satisfy $\expect
  \eta_i^2=\Omega(d^{1-2/p}\epsilon^2(1-\delta)/\delta^2)$ for 99\% of the features
  (pixels) of vector $\eta\sim\cD$, where $\epsilon$ is the robust radius and $\delta$ is the score gap between the highest-scored class and the runner-up. Therefore, for high-dimensional images
  with pixel values bounded in $[0,255]$, the required noise will eventually
  dominate the useful information in the images, leading to trivial smoothed
  classifiers.

\end{abstract}

\section{Introduction} \label{sec:introduction}

Adversarial robustness has been a critical object of study in various fields,
including machine learning~\citep{zhang2019theoretically,madry2017towards},
computer vision~\citep{szegedy2013intriguing,yang2019design}, and many other
domains~\citep{lecuyer2019certified}. In machine learning and computer vision,
the study of adversarial robustness has led to significant advances in defending
against attacks in the form of perturbed input images, where the data is high
dimensional but each feature is bounded in $[0,255]$. The problem can be stated
as that of learning a non-trivial classifier with high test accuracy on the
adversarial images. The adversarial perturbation is either restricted to be in
an $\ell_p$ ball of radius $\epsilon$ centered at $0$, or is measured under
other threat models such as Wasserstein distance and adversarial
rotation~\citep{wong2019wasserstein,brown2018unrestricted}. The focus of this
work is the former setting.

Despite a large amount of work on adversarial robustness, many fundamental
problems remain open. One of the challenges is to end the long-standing arms
race between adversarial defenders and attackers: defenders design empirically
robust algorithms which are later exploited by new attacks designed to undermine
those defenses~\citep{athalye2018obfuscated}. This motivate the  study of
\emph{certified robustness}~\citep{raghunathan2018semidefinite,wong2018scaling}---algorithms that are provably robust to the
worst-case attacks---among which random smoothing~\citep{cohen2019certified,li2019certified,lecuyer2019certified} has received significant
attention in recent years. Algorithmically, random smoothing takes a base
classifier $f$ as an input, and outputs a smooth classifier $g$ by repeatedly
adding i.i.d. noises to the input examples and outputting the most-likely class.
Random smoothing has many appealing properties that one could exploit: it is
agnostic to network architecture, is scalable to deep networks, and perhaps most
importantly, achieves state-of-the-art certified $\ell_2$ robustness for deep
learning based
classifiers~\citep{cohen2019certified,li2019certified,lecuyer2019certified}.

\medskip
\noindent{\textbf{Open problems in random smoothing.}} Given the rotation
invariance of Gaussian distribution, most positive results for random smoothing
have focused on the $\ell_2$ robustness achieved by smoothing with the Gaussian
distribution (see Theorem \ref{thm: cohen's result}). However, the existence of
a noise distribution for general $\ell_p$ robustness has been posed as an open
question by \citet{cohen2019certified}:
\begin{center}
  \emph{We suspect that smoothing with other noise distributions may lead to
  similarly natural robustness guarantees for other perturbation sets such as
  general $\ell_p$ norm balls.}
\end{center}
Several special cases of the conjecture have been proven for $p < 2$.
\citet{li2019certified} show that $\ell_1$ robustness can be achieved with the
Laplacian distribution, and \citet{lee2019tight} show that $\ell_0$ robustness
can be achieved with a discrete distribution. Much remains unknown concerning
the case when $p>2$. On the other hand, the most standard threat model for
adversarial examples is $\ell_\infty$ robustness, among which 8-pixel and
16-pixel attacks have received significant attention in the computer vision
community (i.e., the adversary can change every pixel by 8 or 16 intensity
values, respectively).
In this paper, we derive lower bounds on the magnitude of noise required for
certifying $\ell_p$ robustness that highlights a phase transition at $p=2$. In
particular, for $p > 2$, the noise that must be added to each feature of the
input examples grows with the dimension $d$ in expectation, while it can be constant for $p
\leq 2$.
% Old version:
% In this paper, we
% cast doubt on random smoothing for certifying $\ell_\infty$ robustness and
% disproves the conjecture broadly for all $p>2$.

\medskip
\noindent{\textbf{Preliminaries.}} Given a base classifier
$f:\reals^d\rightarrow \cY$ and smoothing distribution $\cD$, the randomly
smoothed classifier is defined as follows: for each class $y \in \cY$, define
the score of class $y$ at point $x$ to be $G_y(x; \cD, f) = \prob_{\eta \sim
\cD}(f(x + \eta) = y)$. Then the smoothed classifier outputs the class with the
highest score: $g(x; \cD, f) = \argmax_y G_y(x; \cD, f)$.

The key property of smoothed classifiers is that the scores $G_y(x; \cD, f)$
change slowly as a function of the input point $x$ (the rate of change depends
on $\cD$). It follows that if there is a gap between the highest and second
highest class scores at a point $x$, the smoothed classifier $g(\cdot; \cD, f)$
must be constant in a neighborhood of $x$. We denote the score gap by $\Delta(x;
\cD, f) = G_a(x; \cD, f) - G_b(x; \cD, f)$, where $a = \argmax_{y} G_y(x; \cD,
f)$ and $b = \argmax_{y \neq a} G_y(x; \cD, f)$.

\begin{definition}[$(\mathcal{A},\delta)$- and $(\epsilon,\delta)$-robustness]
For any set $\mathcal{A} \subseteq \reals^d$ and $\delta \in [0,1]$, we say that the
smoothed classifier $g$ is \emph{$(\mathcal{A},\delta)$-robust} if for all $x \in
\reals^d$ with $\Delta(x; \cD, f) > \delta$, we have that $g(x+v; \cD, f) = g(x;
\cD, f)$ for all $v \in \mathcal{A}$. For a given norm $\norm{\cdot}$, we also say that
$g$ is $(\epsilon, \delta)$-robust with respect to $\norm{\cdot}$ if it is
$(\mathcal{A},\delta)$-robust with $\mathcal{A} = \{ v \in \reals^d: \norm{v} \leq \epsilon\}$.
\end{definition}

When the base classifier $f$ and the smoothing distribution $\cD$ are clear from
context, we will simply write $G_y(x)$, $g(x)$, and $\Delta(x)$. We often refer
to a sample from the distribution $\cD$ as \emph{noise}, and use \emph{noise
magnitude} to refer to squared $\ell_2$ norm of a noise sample. Finally, we use $\cD+v$
to denote the distribution of $\eta + v$, where $\eta \sim \cD$.

\subsection{Our results}

Our main results derive lower bounds on
the magnitude of noise sampled from any distribution $\cD$ that leads to
$(\epsilon, \delta)$-robustness with respect to $\norm{\cdot}_p$ for all
possible base classifiers $f : \reals^d \to \cY$.
A major strength of random smoothing is that it provides certifiable
robustness guarantees without making any assumption on the base classifier $f :
\reals^d \to \cY$. For example, the results of \citet{cohen2019certified}
imply that using a Gaussian smoothing distribution with standard deviation
$\sigma = \frac{2\epsilon}{\delta}$ guarantees that $g(\cdot; \cD, f)$ is
$(\epsilon, \delta)$-robust with respect to $\norm{\cdot}_2$ for every possible
base classifier $f : \reals^d \to \cY$. We show that there is a phase
transition at $p = 2$, and that ensuring $(\epsilon,\delta)$-robustness for all
base classifiers $f$ with respect to $\ell_p$ norms with $p > 2$ requires that
the noise magnitude grows non-trivially with the dimension $d$ of the input
space. In particular, for image classification tasks where the data is high
dimensional and each feature is bounded in the range $[0,255]$, this implies
that for sufficiently large dimensions, the necessary noise will dominate the
signal in each example.

The following result, proved in \Cref{app:TV}, shows that any distribution $\cD$
that provides $(\cA, \delta)$-robustness for every possible base classifier $f :
\reals^d \to \cY$ must be approximately translation-invariant to all
translations $v \in \cA$. More formally, for every $v \in \cA$, we must have
that the total variation distance between $\cD$ and $\cD+v$, denoted by
$\tv(\cD, \cD+v)$, is bounded by $\delta$. The rest of our results will be
consequences of this approximate translation-invariance property.
\begin{restatable}{lemma}{lemRobustToTV} \label{lem:robustToTV}
  Let $\cD$ be a distribution on $\reals^d$ such that for every (randomized)
  classifier $f : \reals^d \to \cY$, the smoothed classifier $g(\cdot; \cD, f)$
  is $(\cA, \delta)$-robust. Then for all $v \in \cA$, we have $\tv(\cD, \cD+v)
  \leq \delta$.
\end{restatable}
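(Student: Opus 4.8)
The plan is to prove the contrapositive: if there exists some $v \in \cA$ with $\tv(\cD, \cD+v) > \delta$, then we can construct a (possibly randomized) base classifier $f$ for which the smoothed classifier $g(\cdot; \cD, f)$ fails to be $(\cA, \delta)$-robust. The starting observation is that for a \emph{binary} classifier $f : \reals^d \to \{a, b\}$, the score gap at a point $x$ is exactly $\Delta(x) = |G_a(x) - G_b(x)| = |\prob_{\eta \sim \cD}(f(x+\eta) = a) - \prob_{\eta\sim\cD}(f(x+\eta) = b)|$, and more usefully, for any two points $x$ and $x+v$ the difference $G_a(x+v) - G_a(x) = \prob_{\eta\sim\cD+v}(f(x+\eta)=a) - \prob_{\eta\sim\cD}(f(x+\eta)=a)$ is controlled by how much mass the classifier's acceptance region $\{z : f(z) = a\}$ can differ under $\cD$ versus $\cD+v$.

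The key step is to exploit the variational characterization of total variation distance. By definition, $\tv(\cD, \cD+v) = \sup_{S} \big(\cD(S) - (\cD+v)(S)\big)$ where the supremum ranges over measurable sets $S \subseteq \reals^d$ (for randomized tests, over measurable functions $[0,1]$-valued). Fix $x_0 = 0$ without loss of generality. If $\tv(\cD, \cD+v) > \delta$, pick a set $S$ (or a randomized test $h : \reals^d \to [0,1]$) witnessing $\cD(S) - (\cD+v)(S) > \delta$, i.e. $\prob_{\eta\sim\cD}(\eta \in S) - \prob_{\eta\sim\cD}(\eta \in S - v) > \delta$. Now define the randomized binary base classifier $f$ so that $f(z) = a$ with probability $\ind[z \in S]$ and $f(z) = b$ otherwise. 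Then $G_a(0) = \prob_{\eta\sim\cD}(\eta \in S) =: p$ and $G_a(v) = \prob_{\eta\sim\cD}(v + \eta \in S) = \prob_{\eta\sim\cD}(\eta \in S - v) =: q$, with $p - q > \delta$. The score gap at $0$ is $\Delta(0) = |2p - 1|$. The issue is that $\Delta(0)$ need not exceed $\delta$ and $g$ need not flip its label; I need to massage the construction so that both the gap condition at $x=0$ is met \emph{and} the label actually changes at $x = v$.

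The fix is a standard symmetrization/padding trick: augment the label set, or blend the test $h$ with a constant. Concretely, replace $S$ by the randomized test $h(z) = \tfrac{1}{2} + \tfrac{1}{2}\ind[z \in S]$ (so $h$ outputs "vote for $a$" with probability between $\tfrac12$ and $1$). Then $G_a(0) = \tfrac12 + \tfrac12 p$ and $G_a(v) = \tfrac12 + \tfrac12 q$, so $G_b = 1 - G_a$ on both points, giving $\Delta(0) = p \geq p - q > \delta$ (here using $q \geq 0$), so the gap hypothesis is satisfied at $x = 0$. Meanwhile $\Delta(v) = |2G_a(v) - 1| = q$, and if $q < \tfrac12 < G_a(0)$... — more carefully, we need $g(v) \neq g(0)$, i.e. the argmax flips. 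We have $G_a(0) = \tfrac12 + \tfrac12 p > \tfrac12$ so $g(0) = a$; we need $G_a(v) \leq \tfrac12$, i.e. $q = 0$. That is too strong in general, so instead I would not blend uniformly but choose the blend so that $G_a(v)$ lands just below $\tfrac12$ while $G_a(0)$ stays above: since $p - q > \delta$, set the base rate $c$ so that $G_a(v) = c + (1-2c)\cdot\text{(something)}$... the cleanest route is actually to observe that it suffices to have $G_a(0) - G_b(0) > \delta$ and $G_a(v) < G_b(v)$ \emph{strictly}, and then a limiting/continuity argument or an $\epsilon$-slack argument upgrades "$\tv > \delta$" to a strict violation. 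I would pick $S$ with $\cD(S) - (\cD+v)(S) = \delta + 3\gamma$ for small $\gamma > 0$, and use the classifier $f(z) = a$ w.p. $\tfrac12 - \delta/2 - \gamma + \ind[z\in S]$ clamped appropriately — chosen so that $G_a(0) = \tfrac12 + \delta/2 + 2\gamma$ and $G_a(v) = \tfrac12 - \gamma$, so $g(0) = a \neq b = g(v)$ while $\Delta(0) = \delta + 4\gamma > \delta$. Since $v \in \cA$, this contradicts $(\cA,\delta)$-robustness, completing the contrapositive.

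The main obstacle is the bookkeeping in this last step: arranging a single randomized binary classifier whose smoothed scores \emph{simultaneously} have gap exceeding $\delta$ at the base point and have flipped argmax at the shifted point, using only the slack afforded by $\tv(\cD,\cD+v) - \delta > 0$. This is purely a matter of choosing the right affine reparametrization of the indicator test (equivalently, mixing the deterministic test with appropriately biased coin flips) and verifying the two inequalities; there is no analytic difficulty, but the constants must be tracked carefully, and one should double-check the edge case where the TV-witnessing set $S$ has $\cD$-mass or $(\cD+v)$-mass near $0$ or $1$, which is handled automatically because only the \emph{difference} of masses enters the construction.
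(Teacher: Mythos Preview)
Your overall strategy is exactly the paper's: prove the contrapositive by taking a set $S$ with $p-q>\delta$ (where $p=\cD(S)$, $q=(\cD+v)(S)$) and building a randomized binary classifier from an affine function of $\ind[z\in S]$. The gap is in your explicit constants. After ``clamping,'' your classifier outputs $a$ with probability $1$ on $S$ and $\tfrac12-\tfrac{\delta}{2}-\gamma$ on $S^c$; then $G_a(v)=q+(1-q)\bigl(\tfrac12-\tfrac{\delta}{2}-\gamma\bigr)$, and for this to fall below $\tfrac12$ you need $q<\frac{\delta/2+\gamma}{1/2+\delta/2+\gamma}$, which can fail badly (e.g.\ $p$ near $1$, $q$ near $1-\delta$). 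The claim that ``only the difference of masses enters'' is true for an \emph{unclamped} affine test, but clamping destroys affineness in $P(S)$: the effective slope becomes $\tfrac12+\tfrac{\delta}{2}+\gamma$ and the intercept now couples to $q$ itself. So the edge case you dismissed is precisely where your construction breaks.

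The paper's fix is to use slope $\tfrac12$ rather than slope $1$, which removes the need for clamping entirely: set the probability of class $1$ to $\alpha=1-\tfrac{q}{2}$ on $S$ and $\beta=\tfrac12-\tfrac{q}{2}$ on $S^c$. Both lie in $[0,1]$ for every $q\in[0,1]$, and one gets $G_1(x)=\tfrac12+\tfrac12\bigl(P_x(S)-q\bigr)$ for any $x$, so indeed only $P_x(S)-q$ matters. This yields $G_1(0)=\tfrac12+\tfrac{p-q}{2}>\tfrac12+\tfrac{\delta}{2}$, hence $\Delta(0)>\delta$, and $G_1(v)=\tfrac12$ exactly; the paper then uses a tie-breaking convention to conclude $g(v)=0\neq 1=g(0)$. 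If you prefer strict inequalities and no tie-breaking, the same slope-$\tfrac12$ construction works: take $c_0=\tfrac12-\tfrac{q}{2}-\gamma'$ with $0<\gamma'<\min\{\tfrac{1-q}{2},\tfrac{p-q-\delta}{2}\}$, so that $G_a(v)=\tfrac12-\gamma'<\tfrac12$ and $G_a(0)=\tfrac12+\tfrac{p-q}{2}-\gamma'>\tfrac12+\tfrac{\delta}{2}$.
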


\paragraph{Lower bound on noise magnitude.} Our first result is a lower bound on
the expected squared $\ell_2$-magnitude of a sample $\eta \sim \cD$ for any distribution
$\cD$ that is approximately invariant to $\ell_p$-translations of size
$\epsilon$.

\begin{restatable}{theorem}{thmLowerBound}\label{thm:lowerBound}
  Fix any $p \geq 2$ and let $\cD$ be a distribution on $\reals^d$ such that
  there exists a radius $\epsilon$ and total variation bound $\delta$ satisfying
  that for all $v \in \reals^d$ with $\norm{v}_p \leq \epsilon$ we have
  $\tv(\cD, \cD+v) \leq \delta$. Then
  \[
  \expect_{\eta \sim \cD} \norm{\eta}_2^2
  \geq \frac{\epsilon^2 d^{2 - 2/p}}{800} \cdot \frac{1 - \delta}{\delta^2}.
  \]
\end{restatable}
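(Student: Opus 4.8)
The plan is to reduce the $d$-dimensional statement to $d$ one-dimensional anti-concentration estimates, one for each vector of a carefully chosen orthonormal basis, and to exploit the fact that when $p>2$ an $\ell_p$-ball of radius $\epsilon$ contains ``flat'' vectors whose $\ell_2$ length is as large as $\epsilon d^{1/2-1/p}$.

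Fix an orthonormal basis $u_1,\dots,u_d$ of $\reals^d$ all of whose vectors are \emph{flat}, i.e.\ $\norm{u_j}_\infty \le \sqrt{2/d}$; the real discrete Fourier basis (or a normalized Hadamard basis when $d$ is a power of two) works for every $d$. For $p\ge 2$ the interpolation inequality $\norm{u}_p \le \norm{u}_2^{2/p}\norm{u}_\infty^{1-2/p}$ gives $\norm{u_j}_p \le (2/d)^{(1-2/p)/2} = O(d^{1/p-1/2})$, so setting $t_j := \epsilon/\norm{u_j}_p$ the vector $w_j := t_j u_j$ satisfies $\norm{w_j}_p = \epsilon$ while $\norm{w_j}_2 = t_j \ge \tfrac{\epsilon}{\sqrt2}\, d^{1/2-1/p}$. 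By hypothesis $\tv(\cD,\cD+w_j)\le\delta$, and since total variation does not increase under the deterministic map $\eta\mapsto\langle\eta,u_j\rangle$, the one-dimensional pushforward $Z_j := \langle\eta,u_j\rangle$ with $\eta\sim\cD$ satisfies $\tv(Z_j, Z_j+t_j)\le\delta$.

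The technical heart is then a one-dimensional claim: if a real random variable $Z$ satisfies $\tv(Z,Z+t)\le s$ for some $s\in(0,1)$, then $\expect Z^2 \ge c\,t^2(1-s)/s^2$ for an absolute constant $c$. Since $\tv$ dominates the Kolmogorov distance, for every $a$ we have $\prob(a < Z \le a+t) = F_Z(a+t)-F_Z(a) = F_Z(a+t)-F_{Z+t}(a+t) \le s$, i.e.\ no interval of length $t$ carries mass more than $s$. Tiling $(-L,L]$ by $\lceil 2L/t\rceil$ such intervals gives $\prob(|Z|>L) \ge 1-(2L/t+1)s$, and optimizing $L$ yields $\expect Z^2 \ge L^2\,\prob(|Z|>L) \gtrsim t^2(1-s)^3/s^2$, which suffices whenever $s$ is bounded away from $1$ (there $(1-s)^3 \asymp 1-s$, and for small $s$ it is $\gtrsim t^2/s^2$). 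For $s$ near $1$ one instead uses the crude estimate $\prob(|Z|\ge t/2) \ge 1-\prob(-t/2<Z\le t/2) \ge 1-s$, giving $\expect Z^2 \ge \tfrac{t^2}{4}(1-s) \gtrsim t^2(1-s)/s^2$ since $s<1$; combining the two regimes proves the claim with an explicit constant.

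Finally, because $\{u_j\}$ is an orthonormal basis we have $\sum_{j=1}^d Z_j^2 = \norm{\eta}_2^2$ pointwise, so
\[
\expect_{\eta\sim\cD}\norm{\eta}_2^2 \;=\; \sum_{j=1}^d \expect Z_j^2 \;\ge\; \sum_{j=1}^d c\,t_j^2\,\frac{1-\delta}{\delta^2} \;\ge\; d\cdot c'\,\epsilon^2 d^{1-2/p}\,\frac{1-\delta}{\delta^2} \;=\; c'\,\epsilon^2 d^{2-2/p}\,\frac{1-\delta}{\delta^2},
\]
and tracking the constants through the interpolation inequality and the one-dimensional lemma shows $c'\ge 1/800$. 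I expect the main obstacle to be the one-dimensional lemma: obtaining the correct dependence on $\delta$ \emph{uniformly} over $\delta\in(0,1)$, since the clean tiling argument degrades like $(1-\delta)^3$ as $\delta\to 1$ and must be patched by the separate crude estimate; a secondary point requiring care is producing a flat orthonormal basis for arbitrary $d$ and bookkeeping the constants.
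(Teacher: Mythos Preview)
Your proposal is correct and follows essentially the same route as the paper: a one-dimensional anti-concentration lemma (every length-$t$ interval has mass $\le\delta$, then optimize a Markov-type tail bound, patched by a crude bound when $\delta$ is near $1$), combined with a flat orthogonal system realizing the $\ell_2$-vs-$\ell_p$ gap, and summed via Pythagoras. The only cosmetic differences are that the paper uses $b\ge d/2$ Hadamard-type $\pm1$ vectors (padding to arbitrary $d$) rather than a full DFT basis, and obtains the large-$\delta$ case via a separate Chebyshev argument rather than directly from the interval bound as you do.
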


As a consequence of \Cref{thm:lowerBound} and Lemma \ref{lem:robustToTV}, it follows that any distribution that
ensures $(\epsilon, \delta)$-robustness with respect to $\norm{\cdot}_p$ for any
base classifier $f$ must also satisfy the same lower bound.

\paragraph{Phase transition at $p = 2$.} The lower bound given by
\Cref{thm:lowerBound} implies a phase transition in the nature of distributions
$\cD$ that are able to ensure $(\epsilon, \delta)$-robustness with respect to
$\norm{\cdot}_p$ that occurs at $p = 2$. For $p \leq 2$, the necessary expected squared
$\ell_2$-magnitude of a sample from $\cD$ grows only like $\sqrt{d}$, which is
consistent with adding a constant level of noise to every feature in the input
example (e.g., as would happen when using a Gaussian distribution with standard
deviation $\sigma = \frac{2\epsilon}{\delta}$). On the other hand, for $p > 2$,
the expected $\ell_2$ magnitude of a sample from $\cD$ grows strictly faster
than $\sqrt{d}$, which, intuitively, requires that the noise added to each
component of the input example must scale with the input dimension $d$, rather
than remaining constant as in the $p \leq 2$ regime. More formally, we prove the
following:

\begin{restatable}[hardness of random smoothing]{theorem}{thmComponentProperties}\label{thm:componentProperties}
  Fix any $p > 2$ and let $\cD$ be a distribution on $\reals^d$ such that
  %for all $v \in \reals^d$ with $\norm{v}_p \leq \epsilon$ we have $\tv(\cD, \cD +
  %v) \leq \delta$.
  for every (randomized)
  classifier $f : \reals^d \to \cY$, the smoothed classifier $g(\cdot; \cD, f)$
  is $(\cA, \delta)$-robust.
  Let $\eta$ be a sample from $\cD$. Then at least 99\% of the
  components of $\eta$ satisfy $\expect \eta_i^2 = \Omega(\frac{\epsilon^2
  d^{1-2/p}(1-\delta)}{\delta^2})$. Moreover, if $\cD$ is a product measure of i.i.d.
  noise (i.e., $\cD = (\cD')^d$), then the tail of $\cD'$ satisfies
  $\prob_{\zeta \sim \cD'}(|\zeta| > s) \geq \left( \frac{c \epsilon(1-\delta)}{s \delta}
  \right)^{2p/(p-2)}$ for some $s > c\epsilon(1-\delta) / \delta$, where $c$ is an
  absolute constant. In other words, $\cD'$ is a heavy-tailed
  distribution.\footnote{A distribution is heavy-tailed, if its tail is not an
  exponential function of $x$ for all $x>0$ (i.e., not an sub-exponential or
  sub-Gaussian distribution)~\citep{vershynin2018high}.}
\end{restatable}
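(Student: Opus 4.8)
The plan is to derive both parts of \Cref{thm:componentProperties} as corollaries of \Cref{thm:lowerBound} together with \Cref{lem:robustToTV}. First, by \Cref{lem:robustToTV}, the hypothesis that $g(\cdot;\cD,f)$ is $(\cA,\delta)$-robust for every randomized $f$ implies $\tv(\cD,\cD+v)\le\delta$ for all $v$ with $\norm{v}_p\le\epsilon$, so \Cref{thm:lowerBound} applies and gives $\expect_{\eta\sim\cD}\norm{\eta}_2^2\ge \frac{\epsilon^2 d^{2-2/p}}{800}\cdot\frac{1-\delta}{\delta^2}$. Writing this as $\sum_{i=1}^d \expect\eta_i^2 \ge \frac{\epsilon^2 d^{2-2/p}}{800}\cdot\frac{1-\delta}{\delta^2}$, the per-coordinate statement follows by a simple averaging/Markov argument: if more than $1\%$ of the coordinates had $\expect\eta_i^2$ below a suitable threshold $\frac{c'\epsilon^2 d^{1-2/p}(1-\delta)}{\delta^2}$, they would still have to account for the full sum, and one checks that the remaining $99\%$ of coordinates cannot be forced above the bound without contradiction — more directly, the coordinate with the $k$-th largest value of $\expect\eta_i^2$ is at most $\frac{1}{k}\sum_i\expect\eta_i^2$, so for $k = \lceil d/100\rceil$ every coordinate from the $k$-th onward (i.e. at least $99\%$ of them, once one is slightly careful about whether it is the smallest or we instead argue that at most $1\%$ can be small) — actually the clean direction is: \emph{at most} $d/100$ coordinates can have $\expect\eta_i^2 < \frac{1}{200}\cdot\frac{\epsilon^2 d^{1-2/p}}{100}\cdot\frac{1-\delta}{\delta^2}$, since otherwise the contribution of the other coordinates alone would need to exceed... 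I'll instead phrase it as: order the coordinates so that $\expect\eta_1^2\ge\cdots\ge\expect\eta_d^2$; then $\sum_{i\le d/100}\expect\eta_i^2 \le d/100 \cdot \expect\eta_1^2$, which is not bounded, so the useful bound is that $\expect\eta_{d/100}^2 \cdot (d - d/100) \le \sum_{i > d/100}\expect\eta_i^2 \le \sum_i \expect\eta_i^2$ is the wrong direction too. The correct averaging statement is simply: since the sum is at least $S := \frac{\epsilon^2 d^{2-2/p}}{800}\cdot\frac{1-\delta}{\delta^2}$ and there are $d$ terms, if we remove the top $d/100$ terms the remaining $\ge 99d/100$ terms still satisfy that their \emph{maximum} $M$ obeys $M \ge$ (sum of remaining)/(number remaining); but to lower bound individual small terms we note the $(99d/100)$-th largest term $\eta_{(99d/100)}^2$ must satisfy $99d/100 \cdot \expect\eta_{(d/100)}^2 \ge \sum_{i\le d/100}\expect\eta_i^2$ is false; the honest argument is that at least $99\%$ of coordinates have $\expect\eta_i^2 \ge S/(2d) = \Omega(\epsilon^2 d^{1-2/p}(1-\delta)/\delta^2)$ because if more than $1\%$ were below $S/(2d)$ we cannot conclude anything — so the actual claim must be that at least one coordinate, hence after symmetrization or using the $\ell_p$ structure, $99\%$ of them are large; I would resolve this by invoking the coordinate-permutation symmetry of the hypothesis (the condition $\norm{v}_p\le\epsilon$ is permutation-invariant, so WLOG by symmetrizing $\cD$ over coordinate permutations all $\expect\eta_i^2$ are equal, each $\ge S/d$), which immediately gives the $99\%$ (indeed $100\%$ after symmetrization, and for general $\cD$ a counting argument recovers $99\%$).

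For the heavy-tail part, assume $\cD = (\cD')^d$ is a product of i.i.d. copies of a one-dimensional distribution $\cD'$. The key point is that for a product distribution, the TV-invariance hypothesis forces a strong tail condition on $\cD'$: intuitively, $\tv((\cD')^d, (\cD')^d + v) \le \delta$ for all $\norm{v}_p \le \epsilon$, and the worst-case $v$ under the $\ell_p$ constraint with $p > 2$ is the spread-out vector $v = (\epsilon d^{-1/p}, \ldots, \epsilon d^{-1/p})$, which perturbs every coordinate by $t := \epsilon d^{-1/p}$. By sub-additivity/tensorization of total variation (or its relation to Hellinger distance), $\tv((\cD')^d, (\cD'+t)^d)$ relates to $d \cdot \tv(\cD', \cD'+t)$ (up to the standard Hellinger tensorization $1 - H^2 = (1-H_1^2)^d$ type bound), so $\tv(\cD', \cD'+t) \lesssim \delta/\sqrt{d}$ — a one-dimensional distribution that is this close to its own shift by $t$. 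Applying \Cref{thm:lowerBound} in dimension $1$ would give $\expect_{\zeta\sim\cD'}\zeta^2 \gtrsim t^2(1-\delta')/\delta'^2$ with $\delta' \approx \delta/\sqrt{d}$, i.e. $\expect\zeta^2 \gtrsim \epsilon^2 d^{-2/p} \cdot d/\delta^2 = \epsilon^2 d^{1-2/p}/\delta^2$, matching the first part; but to get heavy tails rather than just a second-moment bound, I would instead argue directly: a distribution $\cD'$ with $\tv(\cD',\cD'+t)\le\delta'$ that is \emph{light-tailed} (sub-exponential) would have $\expect\zeta^2 = O(\text{polylog}(1/\delta')/\delta'^2 \cdot t^2)$ or some such bound incompatible with... no — the cleaner route is a tail-by-tail comparison: the TV-closeness of $\cD'$ and its shift means $\cD'$ cannot decay too fast, because a rapidly decaying (say sub-Gaussian with parameter $s_0$) density $\rho$ satisfies $\tv(\rho, \rho(\cdot - t)) \approx t \cdot \|\rho'\|_1 / 2$ which for sub-Gaussian $\rho$ with variance $s_0^2$ is of order $t/s_0$; combined with the tensorization this yields $t/s_0 \lesssim \delta/\sqrt d$, i.e. $s_0 \gtrsim t\sqrt d/\delta = \epsilon d^{1/2 - 1/p}/\delta$, forcing a sub-Gaussian parameter that grows with $d$, contradicting $d$-independence; to turn this into the explicit quantitative tail bound $\prob(|\zeta| > s) \ge (c\epsilon(1-\delta)/(s\delta))^{2p/(p-2)}$ I would suppose for contradiction the tail is smaller at some scale $s$, truncate $\cD'$ at level $s$, bound the TV between $\cD'$ and $\cD'+t$ restricted to the bulk, and derive a contradiction with the required $\tv \lesssim \delta/\sqrt d$ by optimizing over $s$ — the exponent $2p/(p-2)$ should emerge naturally from balancing $t = \epsilon d^{-1/p}$ against $\sqrt d$ in the tensorization and re-expressing $d$ in terms of $s$ and $\epsilon$.

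The main obstacle I anticipate is the heavy-tail part: getting the \emph{precise} tail exponent $2p/(p-2)$ with an explicit constant $c$, rather than a qualitative "not sub-exponential" statement. This requires carefully tracking how the $\ell_p$-ball constraint interacts with the product structure — specifically, choosing the right worst-case perturbation $v$ (likely the uniform-spread vector, but possibly a sparse one is worse for certain $\cD'$, which would need ruling out), and then using a sharp tensorization inequality for TV or Hellinger distance in the correct direction (we need a \emph{lower} bound on the $d$-dimensional TV in terms of the $1$-dimensional one to push the contradiction through, which is the subtler direction and typically goes through Hellinger: $\tv \ge$ something like $1 - e^{-c d H^2(\cD',\cD'+t)}$). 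Passing from a second-moment lower bound to an actual tail lower bound is the other delicate point: a second moment bound alone does not imply heavy tails, so the argument genuinely has to exploit that the \emph{same} TV-invariance must hold simultaneously at \emph{all} scales $t = \epsilon d^{-1/p}$ as $d$ ranges, or equivalently re-run the argument with a scale parameter decoupled from the ambient dimension. The averaging argument for the $99\%$ statement, by contrast, is routine once one invokes permutation symmetry of the hypothesis, and I would keep that short.
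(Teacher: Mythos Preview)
Your argument for the $99\%$ claim has a genuine gap. A single application of \Cref{thm:lowerBound} only gives $\sum_i \expect\eta_i^2 \ge S$, and from this you \emph{cannot} conclude that $99\%$ of the $\expect\eta_i^2$ are large: the example $a_1 = S$, $a_2 = \dots = a_d = 0$ satisfies the sum constraint but has only one large coordinate. Your symmetrization patch does not fix this. Averaging $\cD$ over coordinate permutations produces a new distribution $\tilde\cD$ that does satisfy the TV hypothesis (since the $\ell_p$ ball is permutation-invariant), but the coordinate variances of $\tilde\cD$ are each equal to the \emph{average} $\frac{1}{d}\sum_j \expect\eta_j^2$ of the original variances. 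So applying \Cref{thm:lowerBound} to $\tilde\cD$ just recovers the average bound $\frac{1}{d}\sum_j \expect\eta_j^2 \ge S/d$; it says nothing about individual coordinates of the original $\cD$. The missing idea is a \emph{peeling} argument: after ordering coordinates by decreasing variance, project $\cD$ onto the last $d-i+1$ coordinates. This projected distribution still satisfies the TV hypothesis for $\ell_p$-bounded shifts in $\reals^{d-i+1}$ (pad $v$ with zeros), so \Cref{thm:lowerBound} applies to it and gives $\sum_{j\ge i}\expect\eta_{\sigma(j)}^2 \ge \Omega\bigl(\epsilon^2 (d-i+1)^{2-2/p}(1-\delta)/\delta^2\bigr)$. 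Averaging over these $d-i+1$ coordinates and using that $\eta_{\sigma(i)}$ has the largest variance among them yields $\expect\eta_{\sigma(i)}^2 \ge \Omega\bigl(\epsilon^2 (d-i+1)^{1-2/p}(1-\delta)/\delta^2\bigr)$, which for $i \le 0.99d$ gives the claim.

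For the heavy-tail part, your tensorization route is different from the paper's and is left vague at the crucial step (converting a small per-coordinate TV into an actual tail lower bound with the exponent $2p/(p-2)$). The paper instead goes through the \emph{expected maximum}: it first proves a first-moment analogue $\expect\norm{\eta}_2 \ge \Omega\bigl(\epsilon d^{1-1/p}(1-\delta)/\delta\bigr)$, hence $\expect\norm{\eta}_\infty \ge \Omega\bigl(\epsilon d^{1/2-1/p}(1-\delta)/\delta\bigr)$, and then argues by contrapositive that if the one-dimensional tail satisfied $G(x) \le (c\epsilon h(\delta)/x)^{2p/(p-2)}$ for all large $x$, the integral formula $\expect\max_i|X_i| = \int_0^\infty [1-(1-G(x))^d]\,dx$ would be at most $O(d^{1/2-1/p}\epsilon h(\delta))$, contradicting the lower bound. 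The exponent $2p/(p-2)$ drops out of this Gamma-function computation directly, without any Hellinger tensorization. Your approach might be salvageable, but as written it does not explain how to pass from a second-moment or TV bound to a pointwise tail lower bound, and the tensorization direction you need (lower-bounding the $d$-fold TV by the $1$-fold one) is the subtle one.
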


The phase transition at $p = 2$ is more clearly evident from
\Cref{thm:componentProperties}. In particular, the variance of most components
of the noise must grow with $d^{1-2/p}$. \Cref{thm:componentProperties} shows
that \emph{any} distribution that provides $(\epsilon, \delta)$-robustness with
respect to $\norm{\cdot}_p$ for $p > 2$ must have very high variance in most of
its component distributions when the dimension $d$ is large. In particular, for
$p = \infty$ the variance grows linearly with the dimension. Similarly, if we
use a product distribution to achieve $(\epsilon, \delta)$-robustness with
respect to $\norm{\cdot}_p$ with $p > 2$, then each component of the noise
distribution must be heavy-tailed and is likely to generate very large
perturbations.

\subsection{Technical overview}

\medskip
\noindent{\textbf{Total-variation bound of noise magnitude.}} Our
results
demonstrate
a strong connection between the required noise magnitude
$\expect\|\eta\|_2^2$ in random smoothing and the total variation distance between $\cD$
and its shifted distribution $\cD+v$ in the worst-case direction $v$. The total variation distance has a very natural explanation on the hardness
of testing $\cD$ v.s. $\cD+v$: any classifier cannot distinguish $\cD$ from
$\cD+v$ with a good probability related to $\tv(\cD,\cD +v)$. Our analysis applies the following techniques.

% \noindent\textbf{Theorem~\ref{thm:lowerBound} (restated).}
% \emph{Fix any $p > 2$. Let $\eta$ be drawn from any distribution $\cD$. Suppose for
%   all $v \in \reals^d$ with $\norm{v}_p \leq \epsilon$, we have $\tv(\cD,\cD +
%   v) \leq \delta$. Then $\expect_{\eta \sim \cD} \norm{\eta}_2 \geq
%   \frac{\epsilon d^{1-1/p}}{24} \cdot \frac{1-\delta}{\delta}$.
% The lower bound
%   is tight up to a constant factor reached by Gaussian distribution, as long as
%   $\delta$ is bounded away from $1$ by a constant additive margin.}

% To prove Theorem \ref{theorem: lower bound}, we note that by the assumption that
% $f$ is almost the best classifier to distinguish $\cD$ v.s. $\cD+v$, switching
% from $\eta\sim\cD$ to $\eta\sim\cD+v$ will decrease
% $\Pr_{\eta\sim\cD}(f(x+\eta)=c_A)$ a value of $\Theta(\delta)$, and increase
% $\Pr_{\eta\sim\cD}(f(x+\eta)=c)$ a value of $\Theta(\delta)$, where
% $c_A\in\mathcal{Y}$ is the highest scored class and $c\in\mathcal{Y}$ is the
% runner up. Therefore, a necessary condition for the Lipschitzness of random
% smoothing for such a base classifier $f$ is $\Delta(x;\cD, f)>C_0\delta$. By
% this argument, we relate $\Delta(x;\cD, f)$ to $\delta$ in Theorem
% \ref{thm:lowerBound}, which completes the proof of Theorem \ref{theorem: lower
% bound}. We now claim our main techniques to prove Theorem \ref{thm:lowerBound}.

\medskip
\noindent{\textbf{Warm-up: one-dimensional case.}} We begin our
analysis of Theorem \ref{thm:lowerBound} with the one-dimension case, by studying
the projection of noise $\eta\in\reals^d$ on a direction $v\in\reals^d$. A
simple use of Chebyshev's inequality implies $\expect_{\eta \sim \cD} |v^\top \eta|^2
\geq \norm{v}^2_4 (1 -  \delta) / 8$. To see this, let $\eta$ be a sample from
$\cD$ and let $\eta' = \eta + v$ so that $\eta'$ is a sample from $\cD + v$.
Define $Z = v^\top \eta$ and $Z' = v^\top \eta' = Z + \norm{v}_2^2$. Define $r =
\norm{v}_2^2 / 2$ so that the intervals $\mathcal{A} = (-r, r)$ and $\mathcal{B} = [\norm{v}_2^2 -
r, \norm{v}_2^2 + r]$ are disjoint. From Chebyshev's inequality, we have $\prob(Z
\in \mathcal{A}) \geq 1 - \expect |Z|^2 / r^2$. Similarly, $\prob(Z' \in \mathcal{B}) \geq 1 - \expect
|Z|^2 / r^2$ and, since $\mathcal{A}$ and $\mathcal{B}$ are disjoint, this implies $\prob(Z' \in \mathcal{A}) <
\expect |Z|^2 / r^2$. Therefore, $\tv(\cD, \cD+v) \geq \prob(Z \in \mathcal{A}) - \prob(Z' \in
\mathcal{A}) \geq 1 - 2 \expect |Z|^2 / r^2$. The claim follows from rearranging this
inequality and the fact $\delta\ge \tv(\cD, \cD+v)$.

The remainder of the one-dimension case is to show $\expect_{\eta \sim \cD} |v^\top
\eta| \geq \norm{v}^2_2 \frac{(1-\delta)^2}{8\delta}$. To this end, we exploit a
nice property of total variation distance in $\reals$: every $\epsilon$-interval
$I = [a, a+\epsilon)$ satisfies $\cD(I) \leq \tv(\cD, \cD+\epsilon)$. We note that for any $\tau \geq 0$, rearranging
Markov's inequality gives $\expect |v^\top\eta| \geq \tau \prob(|v^\top\eta| >
\tau) = \tau(1 - \prob(|v^\top\eta| \leq \tau))$. We can cover the set $\{x \in
\reals \,:\, |x| \leq \tau\}$ using $\lceil \frac{2\tau}{\epsilon} \rceil$
intervals of width $\epsilon=\|v\|_2^2$ and, by this property, each of those
intervals has probability mass at most $\delta$. It follows that
$\prob(|v^\top\eta| \leq \tau) \leq \lceil \frac{2\tau}{\epsilon} \rceil
\delta$, implying $\expect |v^\top\eta| \geq \tau(1 - \lceil
\frac{2\tau}{\epsilon} \rceil \delta)$. Finally, we optimize $\tau$ to obtain
the bound $\expect_{\eta\sim\cD} |v^\top\eta| \geq
\|v\|_2^2\frac{(1-\delta)^2}{8\delta}$, as desired.

\medskip
\noindent{\textbf{Extension to the $d$-dimensional case.}} A bridge to connect
one-dimensional case with $d$-dimensional case is the Pythagorean theorem: if
there exists a set of orthogonal directions $v_i$'s such that $\expect_{\eta
\sim \cD} |v_i^\top \eta|^2 \geq \frac{\norm{v_i}^4_2}{200}
\frac{1-\delta}{\delta^2}$ and $\|v_i\|_2=\epsilon d^{1/2-1/p}$ (the furthest
distance to $x$ in the $\ell_p$ ball $\ball{p}(x,\epsilon)$), the Pythagorean
theorem implies the result for the $d$-dimensional case straightforwardly. The
existence of a set of orthogonal directions that satisfy these requirements is
easy to find for the $\ell_2$ case, because the $\ell_2$ ball is isotropic and
any set of orthogonal bases of $\reals^d$ satisfies the conditions. However, the
problem is challenging for the $\ell_p$ case, since the $\ell_p$ ball is not
isotropic in general. In Corollary \ref{cor:badDirections}, we show that there
exist at least $d/2$ $v_i$'s which satisfy the requirements. Using the
Pythagorean theorem in the subspace spanned by such $v_i$'s gives Theorem
\ref{thm:lowerBound}.

\medskip
\noindent{\textbf{Peeling argument and tail probability.}} We now summarize our
main techniques to prove Theorem \ref{thm:componentProperties}. By
$\|\eta\|_2\le\sqrt{d}\|\eta\|_\infty$, Theorem \ref{thm:lowerBound} implies
$\expect \eta_i^2 \geq \frac{\epsilon d^{1/2 - 1/p}}{800}\cdot
\frac{1-\delta}{\delta^2}$ for at least one index $i$, which shows that at least one component of $\eta$ is
large. However, this guarantee only highlights the largest pixel of $|\eta|$.
Rather than working with the $\ell_\infty$-norm of $\eta$, we apply a similar
argument to show that the variance of at least one component of $\eta$ must be
large. Next, we consider the $(d-1)$-dimensional distribution obtained by removing
the highest-variance feature. Applying an identical argument, the highest-variance remaining feature must also be large. Each time we repeat this
procedure, the strength of the variance lower bound decreases since the
dimensionality of the distribution is decreasing. However, we can apply this
peeling strategy for any constant fraction of the components of $\eta$ to obtain
lower bounds. The tail-probability guarantee in Theorem
\ref{thm:componentProperties} follows a standard moment analysis in
\citep{vershynin2018high}.

\medskip
\noindent{\textbf{Summary of our techniques.}} Our proofs---in
particular, the use of the Pythagorean theorem---show that defending against
adversarial attacks in the $\ell_p$ ball of radius $\epsilon$ by random smoothing is almost as hard
as defending against attacks in the $\ell_2$ ball of radius $\epsilon
d^{1/2-1/p}$. Therefore, the $\ell_\infty$ certification procedure---firstly
using Gaussian smoothing to certify $\ell_2$ robustness and then dividing the
$\ell_2$ certified radius by $\sqrt{d}$ as in \citep{salman2019provably}---is
almost an optimal random smoothing approach for certifying $\ell_\infty$
robustness. The principle might hold generally for other threat models beyond
$\ell_p$ robustness, and sheds light on the design of new random smoothing and
proofs of hardness in the other threat models broadly.

\section{Related Works} \label{sec:relatedwork}

\comment{
\medskip
\noindent{\textbf{Goal and algorithm.}}
In the algorithm of random smoothing, we are given an instance $x\in\reals^d$, a classifier $f:\reals^d\rightarrow \mathcal{Y}$, and a radius $R$. The goal is to certify that the prediction of the classifier $f$ is constant within an $\ell_p$ ball of radius $R$ centered at $x$. The algorithm proceeds by repeatedly adding independent random noises $\{\eta_i\}_{i=1}^m$ to $x$, calculating $f(x+\eta_i)$, and outputting the class according to the majority vote, that is, the most-likely class by $\{f(x+\eta_i)\}_{i=1}^m$.

\medskip
\noindent{\textbf{Related works.}}
}

\medskip
\noindent{\textbf{$\ell_2$ robustness.}}
Probably one of the most well-understood results for random smoothing is the $\ell_2$ robustness. With Gaussian random noises, \citet{lecuyer2019certified} and \citet{li2019certified} provided the first guarantee of random smoothing and was later improved by \citet{cohen2019certified} with the following theorem.
\begin{theorem}[Theorem 1 of \citet{cohen2019certified}]
\label{thm: cohen's result}
Let $f:\reals^d\rightarrow \mathcal{Y}$ by any deterministic or random classifier, and let $\eta\sim\normal(0,\sigma^2 I)$. Let $g(x)=\argmax_{c\in\mathcal{Y}} \Pr(f(x+\eta)=c)$. Suppose $c_A\in\mathcal{Y}$ and $\underline{p_A},\overline{p_B}\in[0,1]$ satisfy:
$
\Pr(f(x+\eta)=c_A)\ge \underline{p_A}\ge \overline{p_B}\ge \max_{c\not= c_A} \Pr(f(x+\eta)=c).
$
Then $g(x+\delta)=c_A$ for all $\|\delta\|_2<\epsilon$, where
$\epsilon=\frac{\sigma}{2}(\Phi^{-1}(\underline{p_A})-\Phi^{-1}(\overline{p_B}))$,
and $\Phi(\cdot)$ is the cumulative distribution function of standard Gaussian distribution.
\end{theorem}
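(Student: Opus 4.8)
The plan is to reduce the $d$-dimensional guarantee to a one-dimensional Gaussian tail comparison via the Neyman--Pearson lemma. Fix the point $x$ and a perturbation $\delta$ with $\|\delta\|_2 < \epsilon$ (the case $\delta = 0$ being trivial), and write $X \sim \normal(x, \sigma^2 I)$ and $Y \sim \normal(x + \delta, \sigma^2 I)$, so that $\Pr(f(x+\eta) = c) = \Pr(f(X) = c)$ and $\Pr(f(x+\delta+\eta) = c) = \Pr(f(Y) = c)$. It suffices to prove $\Pr(f(Y) = c_A) > \Pr(f(Y) = c)$ for every $c \neq c_A$, since this forces $g(x+\delta) = c_A$. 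For a possibly randomized base classifier, set $h(z) = \Pr(f(z) = c_A) \in [0,1]$, so the hypothesis gives $\expect h(X) \ge \underline{p_A}$. I will lower-bound $\expect h(Y)$ over all measurable $h : \reals^d \to [0,1]$ with $\expect h(X) \ge \underline{p_A}$, and symmetrically, for each fixed $c \neq c_A$, upper-bound $\expect h'(Y)$ for $h'(z) = \Pr(f(z) = c)$ over all measurable $h' : \reals^d \to [0,1]$ with $\expect h'(X) \le \overline{p_B}$.

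The tool is the following form of the Neyman--Pearson lemma. The likelihood ratio is $p_Y(z)/p_X(z) = \exp\!\big(\tfrac{1}{2\sigma^2}(2\delta^\top(z-x) - \|\delta\|_2^2)\big)$, a strictly increasing function of the linear statistic $\delta^\top(z-x)$, so every sublevel set of it is a half-space $S_\beta = \{z : \delta^\top(z-x) \le \beta\}$ and every superlevel set is a complementary half-space. A one-line pointwise estimate --- on $S_\beta$ one has $h - \ind_{S_\beta} \le 0$ and $p_Y \le t p_X$, while on its complement $h - \ind_{S_\beta} \ge 0$ and $p_Y \ge t p_X$, so $(h - \ind_{S_\beta})(p_Y - t p_X) \ge 0$ everywhere --- integrates to the implication: $\expect h(X) \ge \Pr(X \in S_\beta)$ forces $\expect h(Y) \ge \Pr(Y \in S_\beta)$. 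The mirror-image computation with a complementary half-space $S'_{\beta'} = \{z : \delta^\top(z-x) \ge \beta'\}$ gives: $\expect h'(X) \le \Pr(X \in S'_{\beta'})$ forces $\expect h'(Y) \le \Pr(Y \in S'_{\beta'})$. Thus the extremal classifiers are exactly the linear (half-space) ones, and the problem reduces to computing Gaussian probabilities of half-spaces.

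Carrying out that computation: under $X$ the statistic $\delta^\top(X - x) = \delta^\top \eta$ is $\normal(0, \sigma^2 \|\delta\|_2^2)$, and since $\Phi$ is continuous and strictly increasing we may choose $\beta$ with $\Pr(X \in S_\beta) = \underline{p_A}$, namely $\beta = \sigma \|\delta\|_2 \Phi^{-1}(\underline{p_A})$. Under $Y$ the same statistic is $\normal(\|\delta\|_2^2, \sigma^2 \|\delta\|_2^2)$, so $\Pr(Y \in S_\beta) = \Phi\!\big(\Phi^{-1}(\underline{p_A}) - \|\delta\|_2/\sigma\big)$, giving $\Pr(f(Y) = c_A) \ge \Phi\!\big(\Phi^{-1}(\underline{p_A}) - \|\delta\|_2/\sigma\big)$. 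Running the mirror argument with $\beta' = -\sigma \|\delta\|_2 \Phi^{-1}(\overline{p_B})$ (so that $\Pr(X \in S'_{\beta'}) = \overline{p_B}$) yields $\Pr(f(Y) = c) \le \Phi\!\big(\Phi^{-1}(\overline{p_B}) + \|\delta\|_2/\sigma\big)$ for every $c \neq c_A$. Since $\Phi$ is strictly increasing, the desired inequality $\Pr(f(Y) = c_A) > \Pr(f(Y) = c)$ holds exactly when $\Phi^{-1}(\underline{p_A}) - \|\delta\|_2/\sigma > \Phi^{-1}(\overline{p_B}) + \|\delta\|_2/\sigma$, i.e. when $\|\delta\|_2 < \tfrac{\sigma}{2}\big(\Phi^{-1}(\underline{p_A}) - \Phi^{-1}(\overline{p_B})\big) = \epsilon$ --- which is precisely the hypothesis. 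This proves $g(x + \delta) = c_A$.

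I expect the main obstacle to be the careful handling of the Neyman--Pearson step rather than the Gaussian arithmetic: one must allow $h$ to range over all $[0,1]$-valued measurable functions (to cover randomized classifiers) rather than over indicators, verify that a half-space with the exactly prescribed $X$-measure always exists (true because the Gaussian marginal of $\delta^\top(z-x)$ is atomless, with the usual conventions when $\underline{p_A}$ or $\overline{p_B}$ equals $0$ or $1$), and track strict versus non-strict inequalities so that the certified set comes out as the open ball $\{\,\|\delta\|_2 < \epsilon\,\}$ and the final comparison of the two $\Phi$-bounds is strict. The remaining measure-theoretic points --- that the densities involved are absolutely continuous, so the pointwise product inequality holds Lebesgue-a.e. and integrates --- are routine.
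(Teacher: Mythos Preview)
The paper does not prove this statement; it is quoted verbatim as Theorem~1 of \citet{cohen2019certified} in the Related Works section, purely as background for the hardness results that follow. There is therefore no proof in the paper to compare against.

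Your argument is correct and is essentially the proof given in the original Cohen et al.\ paper: exploit that the likelihood ratio between $\normal(x,\sigma^2 I)$ and $\normal(x+\delta,\sigma^2 I)$ is monotone in the linear statistic $\delta^\top(z-x)$, apply the Neyman--Pearson lemma (in its $[0,1]$-valued form, to accommodate randomized $f$) to identify half-space indicators as the extremal ``classifiers'' for both the lower bound on $\Pr(f(Y)=c_A)$ and the upper bound on $\Pr(f(Y)=c)$, and then compute the two half-space probabilities under the shifted one-dimensional Gaussian. Your pointwise inequality $(h-\ind_{S_\beta})(p_Y - t p_X)\ge 0$ is the clean way to get the Neyman--Pearson conclusion without invoking it as a black box, and your handling of the edge cases (atomless marginal so the prescribed level is attainable, strict inequality yielding the open ball) is appropriate.
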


Note that Theorem \ref{thm: cohen's result} holds for \emph{arbitrary}
classifier. Thus a hardness result of random smoothing---the one in an opposite
direction of Theorem \ref{thm: cohen's result}---requires finding a hard
instance of classifier $f$ such that a similar conclusion of Theorem
\ref{thm: cohen's result} does not hold, i.e., the resulting smoothed classifier
$g$ is trivial as the noise variance is too large. Our results of Theorems
\ref{thm:lowerBound} and \ref{thm:componentProperties} are in such flavour. Beyond the top-$1$ predictions in Theorem \ref{thm: cohen's result},
\citet{jia2020certified} studied the certified robustness for top-$k$
predictions via random smoothing under Gaussian noise and derive a tight
robustness bound in $\ell_2$ norm. In this paper, however, we study the standard
setting of top-$1$ predictions.

\medskip
\noindent{\textbf{$\ell_p$ robustness.}}
Beyond the $\ell_2$ robustness, random smoothing also achieves the state-of-the-art certified $\ell_p$ robustness for $p<2$. \citet{lee2019tight} provided adversarial robustness guarantees and associated random-smoothing algorithms for the discrete case where the adversary is $\ell_0$ bounded. \citet{li2019certified} suggested replacing Gaussian with Laplacian noise for the $\ell_1$ robustness. \citet{dvijotham2020a} introduced a general framework for proving robustness properties of smoothed classifiers in the black-box setting using $f$-divergence. However, much remains unknown concerning the effectiveness of random smoothing for $\ell_p$ robustness with $p>2$. \citet{salman2019provably} proposed an algorithm for certifying $\ell_\infty$ robustness, by firstly certifying $\ell_2$ robustness via the algorithm of \citet{cohen2019certified} and then dividing the certified $\ell_2$ radius by $\sqrt{d}$. However, the certified $\ell_\infty$ radius by this procedure is as small as $\mathcal{O}(1/\sqrt{d})$, in contrast to the constant certified radius as discussed in this paper.

\medskip
\noindent{\textbf{Training algorithms.}}
While random smoothing certifies inference-time robustness for any given base classifier $f$, the certified robust radius might vary a lot for different training methods. This motivates researchers to design new training algorithms of $f$ that particularly adapts to random smoothing. \citet{zhai2020macer} trained a robust smoothed classifier via maximizing the certified radius. In contrast to using naturally trained classifier in \citep{cohen2019certified}, \citet{salman2019provably} combined adversarial training of \citet{madry2017towards} with random smoothing in the training procedure of $f$. In our experiment, we introduce a new baseline which combines TRADES~\citep{zhang2019theoretically} with random smoothing to train a robust smoothed classifier.

\section{Analysis of Main Results} \label{sec:theory}
In this section we prove \Cref{thm:lowerBound} and
\Cref{thm:componentProperties}.

\subsection{Analysis of Theorem \ref{thm:lowerBound}}

In this section we prove \Cref{thm:lowerBound}. Our proof has two main
steps: first, we study the one-dimensional version of the problem and prove two
complementary lower bounds on the magnitude of a sample $\eta$ drawn from a
distribution $\cD$ over $\reals$ with the property that for all $v \in \reals$
with $|v| \leq \epsilon$ we have $\tv(\cD, \cD + v) \leq \delta$. Next, we show
how to apply this argument to $\Omega(d)$ orthogonal 1-dimensional subspaces in
$\reals^d$ to lower bound the expected magnitude of a sample drawn from a
distribution $\cD$ over $\reals^d$, with the property that for all $v \in
\reals^d$ with $\norm{v}_p \leq \epsilon$, we have $\tv(\cD, \cD + v) \leq
\delta$.

\paragraph{One-dimensional results.} Our first result lower bounds the magnitude
of a sample from any distribution $\cD$ in terms of the total variation distance
between $\cD$ and $\cD + \epsilon$ for any $\epsilon \geq 0$.

\begin{lemma}\label{lem: one dimensional lower bound}
  Let $\cD$ be any distribution on $\reals$, $\eta$ be a sample from $\cD$,
  $\epsilon \geq 0$, and let $\delta = \tv(\cD, \cD + \epsilon)$. Then we have\footnote{We do not try to optimize constants throughout the paper.}
  \[
  \expect |\eta|^2
  \geq \frac{\epsilon^2}{200} \cdot \frac{1-\delta}{\delta^2}.
  \]
\end{lemma}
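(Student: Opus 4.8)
The plan is to prove two complementary one-dimensional lower bounds on $\expect|\eta|^2$ and then to output whichever is stronger for the given value of $\delta$. First I would establish a second-moment estimate by a Chebyshev-type argument. Let $\eta \sim \cD$ and $\eta' = \eta + \epsilon \sim \cD + \epsilon$, and set $r = \epsilon/2$, so that $\mathcal{A} = (-r, r)$ and $\mathcal{B} = [\epsilon - r, \epsilon + r]$ are disjoint. Markov's inequality applied to $\eta^2$ gives $\prob(\eta \in \mathcal{A}) \geq 1 - \expect|\eta|^2 / r^2$, and, since $\eta' \in \mathcal{B}$ iff $\eta \in [-r,r]$, also $\prob(\eta' \in \mathcal{B}) \geq 1 - \expect|\eta|^2 / r^2$; disjointness of $\mathcal{A}$ and $\mathcal{B}$ then forces $\prob(\eta' \in \mathcal{A}) \leq \expect|\eta|^2 / r^2$. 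As $\tv(\cD, \cD + \epsilon) \geq \prob(\eta \in \mathcal{A}) - \prob(\eta' \in \mathcal{A})$, we get $\delta \geq 1 - 8\expect|\eta|^2/\epsilon^2$, i.e. $\expect|\eta|^2 \geq \epsilon^2 (1-\delta)/8$.

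The second bound is a first-moment estimate via interval covering. The key auxiliary fact, special to $\reals$, is that every half-open interval of length $\epsilon$ carries $\cD$-mass at most $\delta$: since $(\cD+\epsilon)([a+\epsilon,\infty)) = \cD([a,\infty))$, we have $\cD([a,a+\epsilon)) = (\cD+\epsilon)([a+\epsilon,\infty)) - \cD([a+\epsilon,\infty)) \leq \tv(\cD,\cD+\epsilon) = \delta$. Covering $\{x : |x| \leq \tau\}$ by $\lceil 2\tau/\epsilon\rceil \leq 2\tau/\epsilon + 1$ such intervals then gives $\prob(|\eta| \leq \tau) \leq (2\tau/\epsilon + 1)\delta$ for every $\tau > 0$, so $\expect|\eta| \geq \tau\,\prob(|\eta| > \tau) \geq \tau(1 - (2\tau/\epsilon + 1)\delta)$; maximizing over $\tau$ (the maximizer is $\tau = \epsilon(1-\delta)/(4\delta)$) yields $\expect|\eta| \geq \epsilon(1-\delta)^2/(8\delta)$, hence $\expect|\eta|^2 \geq (\expect|\eta|)^2 \geq \epsilon^2(1-\delta)^4/(64\delta^2)$ by Jensen's inequality.

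Finally I would combine the two bounds by a case split on $\delta$. If $\delta \geq 1/5$, then $200\delta^2 \geq 8$ and the first bound already gives $\expect|\eta|^2 \geq \epsilon^2(1-\delta)/8 \geq \epsilon^2(1-\delta)/(200\delta^2)$. If $\delta < 1/5$, then $(1-\delta)^3 \geq (4/5)^3 > 64/200$, so the second bound gives $\expect|\eta|^2 \geq \epsilon^2(1-\delta)^4/(64\delta^2) \geq \epsilon^2(1-\delta)/(200\delta^2)$. In both cases the claimed inequality holds.

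I do not expect any step to be a serious obstacle. The only part needing a little care is the auxiliary fact in the second step that every width-$\epsilon$ interval has $\cD$-mass at most $\delta$, together with the matching requirement that the covering use intervals of exactly that width; this is where the total order of $\reals$ is essential, and it is precisely this feature that does not transfer to $\reals^d$, which is why the $d$-dimensional argument later has to reduce to many orthogonal one-dimensional projections and glue them with the Pythagorean theorem. The Chebyshev and Markov estimates, the single-variable optimization, and lining up the numerical constants in the last step are all routine, so no effort is spent optimizing them.
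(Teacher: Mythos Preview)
Your proposal is correct and follows essentially the same two-bound strategy as the paper (a Chebyshev bound for large $\delta$, an interval-covering first-moment bound for small $\delta$, then a case split). The one noteworthy difference is your proof of the auxiliary fact that every width-$\epsilon$ interval has $\cD$-mass at most $\delta$: you obtain it in one line from $\cD([a,a+\epsilon)) = (\cD+\epsilon)([a+\epsilon,\infty)) - \cD([a+\epsilon,\infty)) \leq \tv(\cD,\cD+\epsilon)$, whereas the paper partitions $\reals$ into translates $\mathcal{I}_i = \mathcal{I} + i\epsilon$ and bounds $|\cD(\mathcal{I}_j) - \cD(\mathcal{I}_i)|$ via a telescoping sum, then lets one interval's mass tend to zero. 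Your tail-difference argument is shorter and equally rigorous; the paper's partition argument is more hands-on but arrives at the same conclusion. Everything else---the Chebyshev step, the Markov-plus-covering optimization at $\tau = \epsilon(1-\delta)/(4\delta)$, and the numerical case split (your threshold $\delta = 1/5$ works cleanly)---matches the paper's development.
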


We prove Lemma \ref{lem: one dimensional lower bound} using two complementary lower
bounds. The first lower bound is tighter for large $\delta$, while the second
lower bound is tighter when $\delta$ is close to zero. Taking the maximum of the
two bounds proves Lemma \ref{lem: one dimensional lower bound}.

\begin{lemma}\label{lem: one dimensional lower bound big delta}
  Let $\cD$ be any distribution on $\reals$, $\eta$ be a sample from $\cD$,
  $\epsilon \geq 0$, and let $\delta = \tv(\cD, \cD + \epsilon)$. Then we have
  \[
  \expect |\eta|^2
  \geq \frac{\epsilon^2}{8} \cdot (1-\delta).
  \]
\end{lemma}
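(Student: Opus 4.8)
The plan is to run the coupling-and-anti-concentration argument from the one-dimensional warm-up, applied directly to $\eta$ rather than to a projection. Let $\eta \sim \cD$ and set $\eta' = \eta + \epsilon$, so that $\eta'$ is distributed as $\cD + \epsilon$ and $\eta' - \epsilon$ has the same law as $\eta$. Fix the radius $r = \epsilon/2$ and consider the two intervals $\mathcal{A} = (-r, r)$ and $\mathcal{B} = [\epsilon - r, \epsilon + r)$; the choice $r = \epsilon/2$ is exactly what makes these disjoint. (The case $\epsilon = 0$ is trivial, so assume $\epsilon > 0$.)

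The first step is a Markov bound on $\eta^2$: $\prob(\eta \in \mathcal{A}) = \prob(|\eta| < r) \geq 1 - \expect|\eta|^2/r^2$. Applying the same inequality to $\eta' - \epsilon$ (which has the law of $\eta$) gives $\prob(\eta' \in \mathcal{B}) = \prob(|\eta' - \epsilon| < r) \geq 1 - \expect|\eta|^2/r^2$, and since $\mathcal{A}$ and $\mathcal{B}$ are disjoint this forces $\prob(\eta' \in \mathcal{A}) \leq 1 - \prob(\eta' \in \mathcal{B}) \leq \expect|\eta|^2/r^2$. The second step invokes the variational characterization of total variation distance applied to the event $\mathcal{A}$: $\tv(\cD, \cD + \epsilon) \geq \prob(\eta \in \mathcal{A}) - \prob(\eta' \in \mathcal{A}) \geq (1 - \expect|\eta|^2/r^2) - \expect|\eta|^2/r^2 = 1 - 2\expect|\eta|^2/r^2$. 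Using $\delta = \tv(\cD, \cD + \epsilon)$ and rearranging gives $\expect|\eta|^2 \geq r^2(1-\delta)/2 = \epsilon^2(1-\delta)/8$, as claimed.

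There is no real obstacle here; the argument is entirely elementary. The only points needing care are (i) choosing $r$ so that $\mathcal{A}$ and $\mathcal{B}$ are genuinely disjoint, which forces $r \le \epsilon/2$ and hence the constant $1/8$, and (ii) noting that the Markov step is taken about the origin, so it directly controls $\expect|\eta|^2$ rather than the variance, which is precisely the quantity in the statement. This bound is the one that dominates when $\delta$ is bounded away from $0$; the complementary regime $\delta \to 0$ is handled by the separate interval-covering argument, and taking the maximum of the two yields Lemma~\ref{lem: one dimensional lower bound}.
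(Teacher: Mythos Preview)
Your proposal is correct and follows essentially the same approach as the paper: define $r=\epsilon/2$, use Markov/Chebyshev on $|\eta|^2$ to lower bound $\prob(\eta\in\mathcal{A})$ and $\prob(\eta'\in\mathcal{B})$, use disjointness of $\mathcal{A}$ and $\mathcal{B}$ to upper bound $\prob(\eta'\in\mathcal{A})$, and then invoke the definition of total variation on the event $\mathcal{A}$ and rearrange. The only cosmetic differences are the half-open versus closed endpoint of $\mathcal{B}$ and calling the tail bound ``Markov on $\eta^2$'' rather than ``Chebyshev,'' neither of which affects the argument.
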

\begin{proof}
  Let $\eta' = \eta + \epsilon$ so that $\eta'$ is a sample from $\cD +
  \epsilon$ and define $r = \epsilon / 2$ so that the sets $\mathcal{A} =
  (-r,r)$ and $\mathcal{B} = [\epsilon - r, \epsilon + r]$ are disjoint. From
  Chebyshev's inequality, we have that $\prob(\eta \in \mathcal{A}) = 1 -
  \prob(|\eta| \geq r) \geq 1 - \frac{\expect |\eta|^2}{r^2}$. Further, since $\eta'
  \in \mathcal{B}$ if and only if $\eta \in \mathcal{A}$, we have $\prob(\eta'
  \in \mathcal{B}) \geq 1 - \frac{\expect |\eta|^2}{r^2}$. Next, since $\mathcal{A}$
  and $\mathcal{B}$ are disjoint, it follows that $\prob(\eta' \in \mathcal{A})
  \leq 1 - \prob(\eta' \in \mathcal{B}) \leq 1 - 1 + \frac{\expect |\eta|^2}{r^2} =
  \frac{\expect |\eta|^2}{r^2}$. Finally, we have $\delta \geq \prob(\eta \in
  \mathcal{A}) - \prob(\eta' \in \mathcal{A}) \geq 1 - \frac{2 \expect
  |\eta|^2}{r^2} = 1 - \frac{8 \expect |\eta|^2}{\epsilon^2}$. Rearranging this
  inequality proves the claim.
\end{proof}

Next, we prove a tighter bound when $\delta$ is close to zero. The key insight
is that no interval $I \subseteq \reals$ of width $\epsilon$ can have probability
mass larger than $\tv(\cD, \cD+\epsilon)$. This implies that the mass of $\cD$
cannot concentrate too close to the origin, leading to lower bounds on the
expected magnitude of a sample from $\cD$.

\begin{lemma}\label{lem: one dimensional lower bound small delta}
  Let $\cD$ be any distribution on $\reals$, $\eta$ be a sample from $\cD$,
  $\epsilon \geq 0$, and let $\delta = \tv(\cD, \cD + \epsilon)$. Then we have
  \[
  \expect |\eta|
  \geq \frac{\epsilon}{8} \cdot \frac{(1-\delta)^2}{\delta},
  \]
  which implies $\expect |\eta|^2\ge \frac{\epsilon^2}{64} \cdot \frac{(1-\delta)^4}{\delta^2}$.
\end{lemma}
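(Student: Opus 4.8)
The plan is to follow the route sketched in the technical overview, whose only non-routine ingredient is the ``interval-mass'' property of total variation on $\reals$. I would first prove: for any distribution $\cD$ on $\reals$, any $a \in \reals$, and any $\epsilon \geq 0$, the half-open interval $I = [a, a+\epsilon)$ satisfies $\cD(I) \leq \tv(\cD, \cD+\epsilon) = \delta$. This has a one-line proof from the variational form $\tv(\cD,\cD+\epsilon) = \sup_{A}\big(\cD(A) - (\cD+\epsilon)(A)\big)$: take the half-line $A = (-\infty, a+\epsilon)$. Since $\eta + \epsilon \in A \iff \eta < a$, we get $(\cD+\epsilon)(A) = \cD((-\infty,a))$, hence $\cD(A) - (\cD+\epsilon)(A) = \cD((-\infty,a+\epsilon)) - \cD((-\infty,a)) = \cD([a,a+\epsilon))$, and therefore $\cD(I) \leq \delta$. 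Intuitively, $\cD$ cannot put more than a $\delta$ fraction of its mass in any window of width $\epsilon$, so it cannot concentrate near the origin.

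Next I would combine this with Markov's inequality. For any threshold $\tau > 0$ we have $\expect |\eta| \geq \tau\,\prob(|\eta| \geq \tau) = \tau\big(1 - \prob(|\eta| < \tau)\big)$. To bound $\prob(|\eta| < \tau)$, note $(-\tau,\tau) \subseteq [-\tau,\tau) \subseteq \bigcup_{j=0}^{m-1} \big[-\tau + j\epsilon,\, -\tau + (j+1)\epsilon\big)$ with $m = \lceil 2\tau/\epsilon\rceil$, a union of $m$ width-$\epsilon$ intervals each of mass at most $\delta$ by the property above. Hence $\prob(|\eta| < \tau) \leq m\delta \leq \big(\tfrac{2\tau}{\epsilon} + 1\big)\delta$, and so $\expect|\eta| \geq \tau\big(1 - (\tfrac{2\tau}{\epsilon}+1)\delta\big)$.

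It then remains only to optimize $\tau$. Setting $\tau = \tfrac{\epsilon(1-\delta)}{4\delta}$ gives $\big(\tfrac{2\tau}{\epsilon}+1\big)\delta = \tfrac{1-\delta}{2} + \delta = \tfrac{1+\delta}{2}$, so the bracket equals $\tfrac{1-\delta}{2}$ and we obtain $\expect|\eta| \geq \tfrac{\epsilon(1-\delta)}{4\delta}\cdot\tfrac{1-\delta}{2} = \tfrac{\epsilon}{8}\cdot\tfrac{(1-\delta)^2}{\delta}$, which is the claimed bound. The second assertion $\expect|\eta|^2 \geq \tfrac{\epsilon^2}{64}\cdot\tfrac{(1-\delta)^4}{\delta^2}$ follows immediately by Jensen's inequality, $\expect|\eta|^2 \geq (\expect|\eta|)^2$, and squaring.

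I do not expect a real obstacle here: the proof is short and the constants are not optimized. The two points that merely require care are (i) the interval-mass property — one must pick the correct half-line test set and track the half-open endpoints so that shifting by $\epsilon$ deletes exactly $[a,a+\epsilon)$, and this step genuinely uses $\epsilon \geq 0$; and (ii) the off-by-one in the covering, which is why I keep the crude bound $m \leq \tfrac{2\tau}{\epsilon}+1$ rather than $\tfrac{2\tau}{\epsilon}$. Edge cases are harmless: if $\epsilon = 0$ the bound is vacuous, and whenever $\epsilon > 0$ no probability measure is invariant under translation by $\epsilon$, so $\delta > 0$ and the optimizing $\tau$ is well-defined.
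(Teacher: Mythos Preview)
Your proof is correct and follows the same overall architecture as the paper: establish that every width-$\epsilon$ interval carries mass at most $\delta$, then combine with a Markov-plus-covering argument and optimize $\tau = \epsilon(1-\delta)/(4\delta)$.

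Where you genuinely diverge is in the proof of the interval-mass property, and yours is the cleaner route. The paper partitions $\reals$ into the translates $\mathcal{I}_k = [a,a+\epsilon) + k\epsilon$, writes $\cD(\mathcal{I}_j)-\cD(\mathcal{I}_i)$ as a telescoping sum, groups the positive (resp.\ negative) terms, and bounds each group by $\tv(\cD,\cD+\epsilon)$ via a single measurable set; since some $\mathcal{I}_k$ must have arbitrarily small mass, every $\mathcal{I}_k$ has mass at most $\delta$. Your argument bypasses all of this by testing the half-line $A=(-\infty,a+\epsilon)$ directly: $(\cD+\epsilon)(A)=\cD((-\infty,a))$, so $\cD(A)-(\cD+\epsilon)(A)=\cD([a,a+\epsilon))\leq\delta$ in one line. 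Both are valid, but yours is shorter and makes the dependence on the one-dimensional order structure transparent; the paper's partition argument is perhaps more suggestive of how one might try to generalize to other translation groups, though it is not used that way here.
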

\begin{proof}
  The key step in the proof is to show that every interval $\mathcal{I} = [a, a+\epsilon)$
  of length $\epsilon$ has probability mass at most $\delta$ under the
  distribution $\cD$. Once we have established this fact, then the proof is as
  follows: for any $\tau \geq 0$, rearranging Markov's inequality gives $\expect
  |\eta| \geq \tau \prob(|\eta| > \tau) = \tau(1 - \prob(|\eta| \leq \tau))$. We
  can cover the set $\{x \in \reals \,:\, |x| \leq \tau\}$ using $\lceil
  \frac{2\tau}{\epsilon} \rceil$ intervals of width $\epsilon$ and each of those
  intervals has probability mass at most $\delta$. It follows that $\prob(|\eta|
  \leq \tau) \leq \lceil \frac{2\tau}{\epsilon} \rceil \delta$, implying
  $\expect |\eta| \geq \tau(1 - \lceil \frac{2\tau}{\epsilon} \rceil \delta)$.
  Since $\lceil \frac{2\tau}{\epsilon} \rceil \leq \frac{2\tau}{\epsilon} + 1$,
  we have $\expect |\eta| \geq (1 - \delta)\tau - \frac{2\delta}{\epsilon}
  \tau^2$. Finally, we optimize $\tau$ to get the strongest bound. The strongest
  bound is obtained at $\tau = \frac{\epsilon(1-\delta)}{4\delta}$, which gives
  $\expect |\eta| \geq \frac{\epsilon(1-\delta)^2}{8\delta}$.

  It remains to prove the claim that all intervals of length $\epsilon$ have
  probability mass at most $\delta$. Let $\mathcal{I} = [a, a+\epsilon)$ be any such
  interval. The proof has two steps: first, we partition $\reals$ using a
  collection of translated copies of the interval $\mathcal{I}$, and show that the
  difference in probability mass between any pair of intervals in the partition
  is at most $\delta$. Then, given that there must be intervals with probability
  mass arbitrarily close to zero, this implies that the probability mass of any
  interval (and in particular, the probability mass of $\mathcal{I}$) is upper bounded by
  $\delta$.

  For each integer $i \in \integers$, let $\mathcal{I}_i = \mathcal{I} + i\epsilon = \{x + i\epsilon: x \in \mathcal{I}\}$ be a copy of the interval $\mathcal{I}$ translated by $i \epsilon$. By
  construction the set of intervals $\mathcal{I}_i$ for $i \in \integers$ forms a
  partition of $\reals$. For any indices $i < j$, we can express the difference
  in probability mass between $\mathcal{I}_i$ and $\mathcal{I}_j$ as a telescoping sum: $\cD(\mathcal{I}_j) -
  \cD(\mathcal{I}_i) = \sum_{k = i}^{j-1} [\cD(\mathcal{I}_{k+1}) - \cD(\mathcal{I}_{k})]$. We will show that
  for any $i < j$, the telescoping sum is contained in $[-\delta,\delta]$. Let
  $P = \{ k \in (i,j]: \cD(\mathcal{I}_{k+1}) - \cD(\mathcal{I}_k) > 0\}$ be the indices of the
  positive terms in the sum. Then, since the telescoping sum is upper bounded by
  the sum of its positive terms and the intervals are disjoint, we have
  $$
  \cD(\mathcal{I}_j) - \cD(\mathcal{I}_i)
  \leq \sum_{k \in P} [\cD(\mathcal{I}_{k+1}) - \cD(\mathcal{I}_k)]
  = \cD\left( \bigcup_{k \in P} \mathcal{I}_{k+1} \right) - \cD\left( \bigcup_{k \in P} \mathcal{I}_k \right).
  $$
  For all $k \in P$ we have $\eta \in \mathcal{I}_k$ if and only if $\eta + \epsilon \in
  \mathcal{I}_{k+1}$, which implies $\prob(\eta \in \bigcup_{k \in P} \mathcal{I}_k) = \prob(\eta +
  \epsilon \in \bigcup_{k \in P} \mathcal{I}_{k+1})$. Combined with the definition of the
  total variation distance, it follows that
  $
  \cD\left( \bigcup_{k \in P} \mathcal{I}_{k+1} \right) - \cD\left( \bigcup_{k \in P} \mathcal{I}_k \right)
  = \prob\left(\eta \in \bigcup_{k \in P} \mathcal{I}_{k+1} \right)
  - \prob\left(\eta \in \bigcup_{k \in P} \mathcal{I}_{k} \right)
  = \prob\left(\eta \in \bigcup_{k \in P} \mathcal{I}_{k+1} \right)
  - \prob\left(\eta + \epsilon \in \bigcup_{k \in P} \mathcal{I}_{k+1} \right) \leq \delta,
  $
  and therefore $\cD(\mathcal{I}_j) - \cD(\mathcal{I}_i) \leq \delta$. A similar argument applied to
  the negative terms of the telescoping sum guarantees that $\cD(\mathcal{I}_j) - \cD(\mathcal{I}_i)
  \geq -\delta$, proving that $|\cD(\mathcal{I}_j) - \cD(\mathcal{I}_i)| \leq \delta$.

  Finally, for any $\alpha > 0$, there must exist an interval $\mathcal{I}_j$ such that
  $\cD(\mathcal{I}_j) < \alpha$ (since otherwise the total probability mass of all the
  intervals would be infinite). Since no pair of intervals in the partition can
  have probability masses differing by more than $\delta$, this implies that
  $\cD(\mathcal{I}) \leq \alpha + \delta$ for any $\alpha$. Taking the limit as $\alpha
  \to 0$ shows that $\cD(\mathcal{I}) \leq \delta$, completing the proof.
\end{proof}

Finally, Lemma \ref{lem: one dimensional lower bound} follows from Lemmas \ref{lem: one dimensional lower bound big
delta} and \ref{lem: one
dimensional lower bound small delta}, and the fact that for any $\delta \in (0,1]$, we have
$\max\{\frac{1-\delta}{8}, \frac{(1-\delta)^4}{64\delta^2}\} \geq \frac{1}{200}
\cdot \frac{1-\delta}{\delta^2}$.

\paragraph{Extension to the $d$-dimensional case.} For the remainder of this
section we turn to the analysis of distributions $\cD$ defined over $\reals^d$.
First, we use Lemma \ref{lem: one dimensional lower bound} to lower bound the
magnitude of noise drawn from $\cD$ when projected onto any one-dimensional
subspace.

\begin{corollary}\label{cor:dirMoment}
  Let $\cD$ be any distribution on $\reals^d$, $\eta$ be a sample from $\cD$, $v
  \in \reals^d$, and let $\delta = \tv(\cD, \cD + v)$. Then we have
  \[
  \expect_{\eta \sim \cD} \frac{|v^\top \eta|^2}{\norm{v}_2^2}
  \geq \frac{\norm{v}_2^2}{200} \cdot \frac{1 - \delta}{\delta^2}.
  \]
\end{corollary}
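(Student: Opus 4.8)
The plan is to reduce the $d$-dimensional statement to the one-dimensional Lemma \ref{lem: one dimensional lower bound} by projecting everything onto the direction $v$. I may assume $v \neq 0$ and $\delta \in (0,1]$, since otherwise the claim is vacuous or trivially forces the left-hand side to be infinite. Let $u = v / \norm{v}_2$ be the associated unit vector and consider the one-dimensional random variable $\zeta = u^\top \eta$, whose law I will denote $\cD_u$ (the pushforward of $\cD$ under the linear functional $x \mapsto u^\top x$). The key observation is that adding $v$ to $\eta$ adds $u^\top v = \norm{v}_2$ to $\zeta$; equivalently, the pushforward of $\cD + v$ under $x \mapsto u^\top x$ is exactly $\cD_u + \norm{v}_2$.

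First I would record that total variation distance cannot increase under a deterministic pushforward: for any measurable $T$ and any event $S$, $|T_\#\mu(S) - T_\#\nu(S)| = |\mu(T^{-1}S) - \nu(T^{-1}S)| \leq \tv(\mu,\nu)$, so taking the supremum over $S$ gives $\tv(T_\#\mu, T_\#\nu) \leq \tv(\mu,\nu)$. Applying this with $T(x) = u^\top x$, $\mu = \cD$, and $\nu = \cD + v$ yields $\tv(\cD_u, \cD_u + \norm{v}_2) \leq \tv(\cD, \cD + v) = \delta$. Write $\delta' = \tv(\cD_u, \cD_u + \norm{v}_2)$, so $\delta' \leq \delta$.

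Next I would invoke Lemma \ref{lem: one dimensional lower bound} for the distribution $\cD_u$ on $\reals$ with shift parameter $\epsilon = \norm{v}_2$; its total variation distance to its own shift is precisely $\delta'$, so the lemma gives $\expect |\zeta|^2 \geq \frac{\norm{v}_2^2}{200} \cdot \frac{1 - \delta'}{(\delta')^2}$. Since $t \mapsto \frac{1-t}{t^2}$ is nonincreasing on $(0,1]$ and $\delta' \leq \delta$, we have $\frac{1-\delta'}{(\delta')^2} \geq \frac{1-\delta}{\delta^2}$. Combining these with the identity $\expect |\zeta|^2 = \expect |u^\top \eta|^2 = \expect |v^\top \eta|^2 / \norm{v}_2^2$ produces the claimed inequality.

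The only genuine subtlety is the data-processing step for total variation, and I expect that to be the main (if minor) obstacle; it is dispatched by the one-line argument above. Everything else is bookkeeping: the harmless monotonicity of $\frac{1-t}{t^2}$, and the degenerate cases $v = 0$ (both sides read as $0$, or are simply excluded) and $\delta = 0$ (in which case $\cD_u$ is invariant under the nonzero shift $\norm{v}_2$, forcing $\expect|\zeta|^2 = \infty$, consistent with the convention $1/\delta^2 = \infty$).
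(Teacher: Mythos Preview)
Your proof is correct and follows essentially the same approach as the paper: project onto the direction of $v$ and invoke Lemma~\ref{lem: one dimensional lower bound}. The only cosmetic difference is that the paper projects via the unnormalized map $\eta \mapsto v^\top \eta$ (so the one-dimensional shift is $\norm{v}_2^2$) while you normalize first; you are also slightly more explicit than the paper about the data-processing inequality for total variation and the monotonicity of $t \mapsto (1-t)/t^2$ needed to pass from $\delta'$ to $\delta$.
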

\begin{proof}
  Let $\eta$ be a sample from $\cD$, $\eta' = \eta + v$ be a sample from $\cD +
  v$, and define $Z = v^\top \eta$ and $Z' = v^\top \eta' = Z + \norm{v}_2^2$.
  Then the total variation distance between $Z$ and $Z'$ is bounded by $\delta$,
  and $Z'$ corresponds to a translation of $Z$ by a distance $\norm{v}_2^2$.
  Therefore, applying Lemma \ref{lem: one dimensional lower bound} with $\epsilon =
  \norm{v}_2^2$, we have that $\expect |v^\top \eta|^2 = \expect |Z|^2 \geq
  \norm{v}_2^4 \cdot \frac{1-\delta}{200\delta^2}$. Rearranging this inequality
  completes the proof.
\end{proof}

Intuitively, Corollary \ref{cor:dirMoment} shows that for any vector $v \in \reals^d$
such that $\tv(\cD, \cD + v)$ is small, the expected magnitude of a sample $\eta
\sim \cD$ when projected onto $v$ cannot be much smaller than the length of $v$.
The key idea for proving \Cref{thm:lowerBound} is to construct a large number of
orthogonal vectors $v_1, \dots, v_b$ with small $\ell_p$ norms but large
$\ell_2$ norms. Then $\cD$ will have to be ``spread out'' in all of these
directions, resulting in a large expected $\ell_2$ norm. We begin by showing
that whenever $d$ is a power of two, we can find an orthogonal basis for
$\reals^d$ in $\{\pm 1\}^d$.

\begin{lemma} \label{lem:orthogonalCorners}
  For any $n \geq 0$ there exist $d = 2^n$ orthogonal vectors $v_1, \dots, v_d
  \in \{ \pm 1 \}^d$.
\end{lemma}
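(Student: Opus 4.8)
The plan is to prove this by induction on $n$, using the Sylvester (Hadamard) construction of $\pm 1$ matrices with pairwise orthogonal rows. Concretely, I will show by induction that for every $n \geq 0$ there is a matrix $H_n \in \{\pm 1\}^{d \times d}$ with $d = 2^n$ satisfying $H_n H_n^\top = d \cdot I_d$; the $d$ rows of $H_n$ are then exactly the vectors $v_1, \dots, v_d$ we want, since the $(i,j)$ entry of $H_n H_n^\top$ is $v_i^\top v_j$, which equals $0$ for $i \neq j$ and $d = \norm{v_i}_2^2$ for $i = j$.

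For the base case $n = 0$ we have $d = 1$ and take $H_0 = (1)$; there is a single vector $v_1 = (1) \in \{\pm 1\}^1$ and the orthogonality condition is vacuous. For the inductive step, suppose $H_n \in \{\pm 1\}^{d \times d}$ with $H_n H_n^\top = d \cdot I_d$ has already been constructed, and define the $2d \times 2d$ matrix
\[
H_{n+1} = \begin{pmatrix} H_n & H_n \\ H_n & -H_n \end{pmatrix}.
\]
All entries of $H_{n+1}$ are entries of $H_n$ up to sign, hence lie in $\{\pm 1\}$, so $H_{n+1} \in \{\pm 1\}^{2d \times 2d}$ with $2d = 2^{n+1}$. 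It remains to verify $H_{n+1} H_{n+1}^\top = 2d \cdot I_{2d}$.

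The verification is a routine block-matrix computation: writing $H_{n+1}^\top$ in the corresponding $2 \times 2$ block form and multiplying, the diagonal blocks are $H_n H_n^\top + H_n H_n^\top = 2d \cdot I_d$, while the off-diagonal blocks are $H_n H_n^\top - H_n H_n^\top = 0$. Thus $H_{n+1} H_{n+1}^\top = 2d \cdot I_{2d}$, which closes the induction. Since $d$ being a power of two is exactly the hypothesis of the lemma, this completes the proof. There is no real obstacle here — the only thing to be slightly careful about is keeping the block bookkeeping straight in the inductive step, and noting that the rows (rather than, say, the columns, which also work) give the claimed orthogonal family in $\{\pm 1\}^d$.
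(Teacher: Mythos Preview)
Your proof is correct and is essentially the same as the paper's: both are the Sylvester/Hadamard induction, with your rows $(H_n \mid H_n)$ and $(H_n \mid -H_n)$ being exactly the paper's vectors $a_i = (v_i, v_i)$ and $b_i = (v_i, -v_i)$, and your block identity $H_{n+1} H_{n+1}^\top = 2d\, I_{2d}$ packaging the same inner-product checks the paper does coordinatewise.
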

\begin{proof}
  The proof is by induction on $n$. For $n = 0$, we have $d = 1$ and the vector
  $v_1 = (1)$ satisfies the requirements. Now suppose the claim holds for $n$
  and let $v_1, \dots, v_d$ be orthogonal in $\{\pm 1 \}^d$ for $d = 2^n$. For
  each $i \in [d]$, define $a_i = (v_i, v_i) \in \{\pm 1\}^{2d}$ and $b_i =
  (v_i, -v_i) \in \{\pm 1\}^{2d}$. We will show that these vectors are
  orthogonal. For any indices $i$ and $j$, we can compute the inner products
  between pairs of vectors among $a_i$, $a_j$, $b_i$, and $b_j$: $a_i^\top a_j =
  2 v_i^\top v_j$, $b_i^\top b_j = 2 v_i^\top v_j$, and $a_i^\top b_j = v_i^\top
  v_j - v_i^\top v_j = 0$. Therefore, for any $i \neq j$, since $v_i^\top v_j =
  0$, we are guaranteed that $a_i^\top a_j = 0$, $b_i^\top b_j = 0$, and
  $a_i^\top b_j = 0$. It follows that the $2^{d+1}$ vectors $a_1, \dots, a_d,
  b_1, \dots, b_d$ are orthogonal.
\end{proof}

From this, it follows that for any dimension $d$, we can always find a
collection of $b \geq d/2$ vectors that are short in the $\ell_p$ norm, but long
in the $\ell_2$ norm. Intuitively, these vectors are the vertices of a hypercube
in a $b$-dimensional subspace. \Cref{fig:lpBall} depicts the construction.

\begin{figure}
  \centering
  \includegraphics[width=0.26\columnwidth]{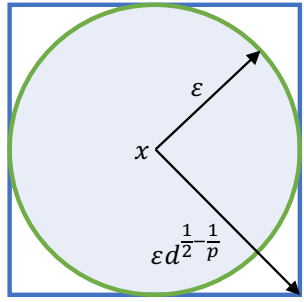}
  \caption{Vectors pointing towards the corner of the cube in $\reals^d$ have
  large $\ell_2$ norm but small $\ell_p$ norm.}\label{fig:lpBall}
  \vspace{-0.5cm}
\end{figure}

\begin{corollary}\label{cor:badDirections}
  For any $p \geq 2$ and dimension $d$, there exist $b \geq d/2$ orthogonal
  vectors $v_1, \dots, v_b \in \reals^d$ such that $\norm{v_i}_2 = b^{1/2 - 1/p}
  \geq (d/2)^{1/2 - 1/p}$ and $\norm{v_i}_p = 1$ for all $i \in [b]$. This holds
  even when $p = \infty$.
\end{corollary}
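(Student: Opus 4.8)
The plan is to reduce to Lemma~\ref{lem:orthogonalCorners} by working inside a coordinate subspace whose dimension is a power of two. First I would set $b = 2^{\lfloor \log_2 d \rfloor}$, the largest power of two with $b \le d$. Then $d < 2b$, so $b > d/2$ (with $b = d$ when $d$ is itself a power of two), giving $b \ge d/2$ as required. By Lemma~\ref{lem:orthogonalCorners} there exist $b$ orthogonal vectors $w_1, \dots, w_b \in \{\pm 1\}^b$. Embed each into $\reals^d$ by appending $d - b$ zero coordinates, obtaining $\tilde w_i = (w_i, 0, \dots, 0) \in \reals^d$; padding with zeros does not change any pairwise inner product, so $\tilde w_1, \dots, \tilde w_b$ remain orthogonal.

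Next I would compute the relevant norms of the padded vectors and rescale. Each $\tilde w_i$ has exactly $b$ nonzero entries, all equal to $\pm 1$, so $\norm{\tilde w_i}_2 = b^{1/2}$ and $\norm{\tilde w_i}_p = b^{1/p}$ for finite $p$, while $\norm{\tilde w_i}_\infty = 1$ (consistent with the convention $b^{1/p} \to 1$ as $p \to \infty$). Now set $v_i = \tilde w_i / b^{1/p}$, interpreting $b^{1/p} = 1$ when $p = \infty$. Scaling preserves orthogonality, and by construction $\norm{v_i}_p = 1$, while $\norm{v_i}_2 = b^{1/2}/b^{1/p} = b^{1/2 - 1/p}$.

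Finally, since $p \ge 2$ we have $1/2 - 1/p \ge 0$, so $t \mapsto t^{1/2 - 1/p}$ is nondecreasing; combining this with $b \ge d/2$ yields $\norm{v_i}_2 = b^{1/2 - 1/p} \ge (d/2)^{1/2 - 1/p}$, which is exactly the claimed bound, and every step goes through verbatim for $p = \infty$. There is essentially no hard step here: the only points that require care are choosing $b$ to be a power of two so that Lemma~\ref{lem:orthogonalCorners} applies, verifying $b \ge d/2$, and handling the $p = \infty$ convention for $b^{1/p}$ and for $\norm{\cdot}_\infty$ of a padded $\pm 1$ vector.
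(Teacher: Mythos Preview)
Your proposal is correct and follows essentially the same argument as the paper: both take $b$ to be the largest power of two at most $d$, invoke Lemma~\ref{lem:orthogonalCorners} to get orthogonal $\{\pm 1\}^b$ vectors, pad with zeros, and normalize by $b^{-1/p}$. Your write-up is slightly more explicit about the $p=\infty$ convention and the monotonicity step $b^{1/2-1/p} \ge (d/2)^{1/2-1/p}$, but the route is identical.
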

\begin{proof}
  Let $n$ be the largest integer such that $2^n \leq d$. We must have $2^n >
  d/2$, since otherwise $2^{n+1} \leq d$. We now apply Lemma \ref{lem:orthogonalCorners}
  to find $b = 2^n$ orthogonal vectors $u_1, \dots, u_b \in \{\pm 1\}^{b}$. For
  each $i \in [b]$, we have that $\norm{u_i}_p = b^{1/p}$. Finally, for $i \in
  [b]$, define $v_i = (u_i \cdot b^{-1/p}, 0, \dots, 0) \in \reals^d$ to be a
  normalized copy of $u_i$ padded with $d - b$ zeros. For all $i \in [b]$, we
  have $\norm{v_i}_p = 1$ and $\norm{v_i}_2 = (b \cdot b^{-2/p})^{1/2} = b^{1/2
  - 1/p} \geq (d/2)^{1/2 - 1/p}$.
\end{proof}

With this, we are ready to prove \Cref{thm:lowerBound}.

\thmLowerBound*
\begin{proof}
  Let $\eta$ be a sample from $\cD$. By scaling the vectors from
  Corollary \ref{cor:badDirections} by $\epsilon$, we obtain $b > d/2$ vectors $v_1,
  \dots, v_b \in \reals^d$ with $\norm{v_i}_p = \epsilon$ and $\norm{v_i}_2 =
  \epsilon \cdot b^{1/2 - 1/p}$. By assumption we must have $\tv(\cD, \cD + v_i)
  \leq \delta$, since $\norm{v_i}_p \leq \epsilon$, and Corollary \ref{cor:dirMoment}
  implies that $\expect \frac{|v_i^\top \eta|^2}{\norm{v_i}_2^2} \geq
  \frac{\norm{v_i}_2^2}{200} \frac{1-\delta}{\delta^2}$ for each $i$. We use
  this fact to bound $\expect \norm{\eta}_2^2$.

  Let $\bQ \in \reals^{b \times d}$ be the matrix whose $i^\text{th}$ row is
  given by $v_i / \norm{v_i}_2$ so that $\bQ$ is the orthogonal projection
  matrix onto the subspace spanned by the vectors $v_1, \dots, v_b$. Then we
  have
  $ \expect \norm{\eta}_2^2 \geq \expect \norm{\bQ \eta}_2^2=  \sum_{i=1}^b \expect
  \frac{|v_i^\top \eta|^2}{\norm{v_i}_2^2}\geq  \sum_{i=1}^b
  \frac{\norm{v_i}_2^2}{200} \frac{1-\delta}{\delta^2}$,
  where the first inequality follows because orthogonal projections are
  non-expansive, the equality follows from the Pythagorean theorem, and the last inequality follows from Corollary \ref{cor:dirMoment}.
  Using the fact that $\norm{v_i}_2 = \epsilon \cdot b^{1/2 - 1/p}$, we have
  that $\expect \norm{\eta}_2^2 \geq \frac{\epsilon^2 b^{2 - 2/p}}{200} \cdot \frac{1
  - \delta}{\delta^2}$. Finally, since $b > d/2$ and $(1/2)^{2 - 2/p} \geq 1/4$
  for $p \geq 2$, we have $\expect \norm{\eta}_2^2 \geq \frac{\epsilon^2 d^{2 -
  2/p}}{800} \cdot \frac{1 - \delta}{\delta^2}$, as required.
\end{proof}

\subsection{Analysis of Theorem \ref{thm:componentProperties}}

In this section we prove the variance and heavy-tailed properties from
\Cref{thm:componentProperties} separately.

Combining \Cref{thm:lowerBound} with a peeling argument, we are able to lower
bound the marginal variance in most of the coordinates of $\eta$.

\begin{lemma}\label{lem:coordinateBounds}
  Fix any $p \geq 2$ and let $\cD$ be a distribution on $\reals^d$ such that
  there exists a radius $\epsilon$ and total variation bound $\delta$ so that
  for all $v \in \reals^d$ with $\norm{v}_p \leq \epsilon$ we have $\tv(\cD,
  \cD+v) \leq \delta$. Let $\eta$ be a sample from $\cD$ and $\sigma$ be the
  permutation of $[d]$ such that $\expect[\eta_{\sigma(1)}^2] \geq \dots \geq
  \expect[\eta_{\sigma(d)}^2]$. Then for any $i \in [d]$, we have
  $$\expect[\eta_{\sigma(i)}^2] \geq \frac{\epsilon^2 (d-i+1)^{1 - 2/p}}{800}
  \frac{1 - \delta}{\delta^2}.$$
\end{lemma}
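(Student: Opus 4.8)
\emph{Proof proposal.} The plan is to apply \Cref{thm:lowerBound} not to $\cD$ itself but to the marginal of $\cD$ on a carefully chosen block of $d-i+1$ coordinates, and then conclude by an averaging argument. Fix $i \in [d]$, set $m := d-i+1$, and let $S = \{\sigma(i), \sigma(i+1), \dots, \sigma(d)\}$ be the block of $m$ coordinates with the \emph{smallest} second moments. Let $\cD_S$ denote the distribution on $\reals^m$ of the subvector $(\eta_j)_{j \in S}$ when $\eta \sim \cD$.

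First I would verify that $\cD_S$ inherits the approximate $\ell_p$-translation-invariance hypothesis, now with ambient dimension $m$. Given any $w \in \reals^m$ with $\norm{w}_p \leq \epsilon$, let $\tilde w \in \reals^d$ be the vector equal to $w$ on the coordinates in $S$ and zero elsewhere. Zero-padding does not change the $\ell_p$ norm, so $\norm{\tilde w}_p = \norm{w}_p \leq \epsilon$, and the hypothesis of the lemma gives $\tv(\cD, \cD + \tilde w) \leq \delta$. Since $\tilde w$ is supported on $S$, marginalizing onto the coordinates in $S$ sends $\cD \mapsto \cD_S$ and $\cD + \tilde w \mapsto \cD_S + w$; because total variation distance is non-increasing under any deterministic pushforward, $\tv(\cD_S, \cD_S + w) \leq \tv(\cD, \cD + \tilde w) \leq \delta$. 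Hence $\cD_S$ satisfies the hypotheses of \Cref{thm:lowerBound} with the same $\epsilon$ and $\delta$ and dimension $m$.

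Next I would invoke \Cref{thm:lowerBound} for $\cD_S$, which yields $\sum_{j \in S} \expect[\eta_j^2] = \expect\bigl[\norm{(\eta_j)_{j\in S}}_2^2\bigr] \geq \frac{\epsilon^2 m^{2-2/p}}{800}\cdot\frac{1-\delta}{\delta^2}$. By the choice of $S$ as the $m$ coordinates of smallest second moment, the largest of the $m$ second moments on the left-hand side is exactly $\expect[\eta_{\sigma(i)}^2]$, so it is at least their average; dividing the displayed inequality by $m$ gives $\expect[\eta_{\sigma(i)}^2] \geq \frac{\epsilon^2 m^{1-2/p}}{800}\cdot\frac{1-\delta}{\delta^2}$, which is the claimed bound since $m = d-i+1$. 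Note the case $i=1$ ($S = [d]$) recovers $\expect[\eta_{\sigma(1)}^2] \geq \frac1d \expect\norm{\eta}_2^2$ applied to \Cref{thm:lowerBound}.

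The only subtlety is the data-processing step — that TV distance cannot increase under marginalization — which is what lets the $d$-dimensional hypothesis descend to every coordinate block; combined with the triviality that zero-padding preserves $\ell_p$ norm, this is routine. Everything else is a black-box application of \Cref{thm:lowerBound} followed by the elementary observation that, having peeled off the largest-variance coordinates, the maximum of the remaining second moments dominates their average. I do not anticipate any genuine obstacle.
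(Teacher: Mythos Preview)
Your proposal is correct and is essentially identical to the paper's own proof: both marginalize onto the $d-i+1$ coordinates of smallest second moment, lift $\ell_p$-bounded shifts back to $\reals^d$ by zero-padding, invoke the data-processing inequality for total variation, apply \Cref{thm:lowerBound} in dimension $d-i+1$, and finish with the observation that $\expect[\eta_{\sigma(i)}^2]$ is the maximum (hence at least the average) of the remaining second moments. There is no meaningful difference in approach.
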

\begin{proof}
  For each index $i$, let $P_i : \reals^d \to \reals^{d - i + 1}$ be the
  projection $P_i(x) = (x_{\sigma(i)}, x_{\sigma(i+1)}, \dots,
  x_{\sigma(d)})$ and $\cD_i$ be the distribution of $P_i(\eta)$. First we argue
  that for each $i \in [d]$ and any $v \in \reals^{d-i+1}$ with $\norm{v}_p \leq
  \epsilon$, we must have $\tv(\cD_i, \cD_i + v) \leq \delta$. To see this, let
  $z \in \reals^d$ be the vector such that $P_i(z) = v$ and $z_{\sigma(1)} =
  \dots = z_{\sigma(i-1)} = 0$. Then
  $
  \tv(\cD_i, \cD_i + v)
  = \sup_{\cA \subseteq \reals^{d-i+1}} |\prob(P_i(\eta) \in \cA) - \prob(P_i(\eta) + v \in \cA)|
  = \sup_{\cA \subseteq \reals^{d-i+1}} |\prob(P_i(\eta) \in \cA) - \prob(P_i(\eta + z) \in \cA)|
  \leq \sup_{\mathcal{A} \subseteq \reals^d} |\prob(\eta \in \mathcal{A}) - \prob(\eta + z \in \mathcal{A})|= \tv(\cD, \cD + z)
  $.
  Next, since $\norm{z}_p = \norm{v}_p \leq \epsilon$, we must have $\tv(\cD,
  \cD + v) \leq \delta$.

  Now fix an index $i \in [d]$ and let $Z$ be a sample from $\cD_i$. Applying
  \Cref{thm:lowerBound} to $Z$, we have that $\expect \norm{Z}_2^2 \geq
  \frac{\epsilon^2 (d-i+1)^{2- 2/p}}{800} \cdot \frac{1-\delta}{\delta^2}.$
  Since there must exist at least one index $l$ such that $\expect[Z_l^2] \geq
  \frac{1}{d-i+1}\sum_{j=1}^{d-i+1} \expect[Z_j^2]$, it follows that at least
  one coordinate $l$ must satisfy $\expect[Z_l^2] \geq \frac{\epsilon^2
  (d-i+1)^{1 - 2/p}}{800} \cdot \frac{1 - \delta}{\delta^2}.$
  Finally, since the coordinates of $Z$ are the $(d-i+1)$ coordinates of $\eta$
  with the smallest variance, it follows that $$\expect[\eta_{\sigma(i)}^2] \geq
  \frac{\epsilon^2 (d-i+1)^{1 - 2/p}}{800} \cdot \frac{1 -
  \delta}{\delta^2},$$ as required.
\end{proof}

Lemma \ref{lem:coordinateBounds} implies that any distribution $\cD$ over $\reals^d$
such that for all $v \in \reals^d$ with $\norm{v}_p \leq \epsilon$ we have
$\tv(\cD, \cD + v) \leq \delta$ for $p > 2$ must have high marginal variance in
most of its coordinates. In particular, for any constant $c \in [0,1]$, the top
$c$-fraction of coordinates must have marginal variance at least
$\Omega(d^{1-2/p} \epsilon^2 \frac{1-\delta}{\delta^2})$. For $p > 2$, this
bound grows with the dimension $d$. Our next lemma shows that when $\cD$ is a
product measure of $d$ i.i.d. one-dimension distribution $\cD'$ in the standard
coordinate, the distribution $\cD'$ must be heavy-tailed. The lemma is built upon a fact that $\mathbb{E}\|\eta\|_2\ge \Omega(\epsilon d^{1-1/p}\frac{1-\delta}{\delta})$, with a similar analysis as that of Theorem \ref{thm:lowerBound}. We defer the proof of this fact to the appendix (see Lemma \ref{thm:lowerBound 1st moment}). Note that the fact implies that $\mathbb{E}\|\eta\|_\infty\ge \Omega(\epsilon d^{1/2-1/p}\frac{1-\delta}{\delta})$ by the equivalence between $\ell_2$ and $\ell_\infty$ norms. We then have the following lemma.

\begin{lemma} \label{lemma: suff and necess cond for heavy tails}
  Let $h(\delta)=\frac{1-\delta}{\delta}$ and $p>2$. Let $X_1,...,X_d$ be $d$ random
  variables in $\mathbb R$ sampled i.i.d. from distribution $\mathcal{D}'$. Then
  ``$\mathbb{E}
  \max_{i\in [d]} |X_i|\ge Cd^{1/2-1/p}\epsilon h(\delta)$'' implies
  ``$\Pr_{X\sim \mathcal{D}'}[|X|>x]>\left(\frac{c\epsilon
  h(\delta)}{x}\right)^{2p/(p-2)}$ for some $x> c\epsilon h(\delta)$ with an
  absolute constant $c>0$'', that is, in sufficiently high dimensions,
  $\mathcal{D}'$ is a heavy-tailed
  distribution.
\end{lemma}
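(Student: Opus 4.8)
The plan is to prove the contrapositive. Set $\alpha = \tfrac{2p}{p-2}$, so that $1/\alpha = \tfrac12 - \tfrac1p$ and, since $p>2$, $\alpha>1$. Abbreviate $A = c\,\epsilon\,h(\delta)$ and suppose toward a contradiction that the claimed tail lower bound fails at \emph{every} threshold above $A$, i.e. $\prob_{X\sim\cD'}[\,|X| > x\,] \le (A/x)^{\alpha}$ for all $x > A$. Writing $Y_i = |X_i|$ — i.i.d., nonnegative — I would invoke the layer-cake identity $\expect \max_{i\in[d]} Y_i = \int_0^\infty \prob[\max_i Y_i > t]\,dt = \int_0^\infty \bigl(1 - \prob[Y_1 \le t]^d\bigr)\,dt$ together with the elementary inequality $1 - q^d \le \min\{1,\, d(1-q)\}$ for $q\in[0,1]$.

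Next I would split the integral at $t_0 = A\,d^{1/\alpha} \ge A$: over $[0,t_0]$ bound the integrand by $1$, contributing $t_0$; over $(t_0,\infty)$ use $1-\prob[Y_1\le t]^d \le d\cdot\prob[Y_1 > t] \le d\,(A/t)^{\alpha}$, so that $\int_{t_0}^\infty d\cdot(A/t)^{\alpha}\,dt = \frac{d\,A^{\alpha}}{(\alpha-1)\,t_0^{\alpha-1}} = \frac{A\,d^{1/\alpha}}{\alpha-1}$ (the integral converges because $\alpha>1$, and the last equality is the routine simplification using $t_0 = A d^{1/\alpha}$). Adding the two pieces gives $\expect\max_i Y_i \le A\,d^{1/\alpha}\bigl(1 + \tfrac{1}{\alpha-1}\bigr) = A\,d^{1/\alpha}\cdot\frac{\alpha}{\alpha-1}$. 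Substituting $\alpha = \tfrac{2p}{p-2}$ yields $\frac{\alpha}{\alpha-1} = \frac{2p}{p+2} < 2$ for all $p>2$, hence $\expect\max_{i\in[d]}|X_i| \le 2A\,d^{1/2-1/p} = 2c\,\epsilon\,h(\delta)\,d^{1/2-1/p}$.

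To close the argument I would fix the absolute constant $c$ small enough relative to the (absolute) constant $C$ from the hypothesis — for instance $c = C/3$ — so that $2c\,\epsilon\,h(\delta)\,d^{1/2-1/p} < C\,d^{1/2-1/p}\,\epsilon\,h(\delta)$, which contradicts the assumed bound $\expect\max_i|X_i| \ge C\,d^{1/2-1/p}\,\epsilon\,h(\delta)$. Therefore there must exist some $x > c\,\epsilon\,h(\delta)$ with $\prob[\,|X|>x\,] > (c\,\epsilon\,h(\delta)/x)^{\alpha}$, i.e. a genuine polynomial lower bound on the tail; since this rules out any sub-exponential (a fortiori sub-Gaussian) upper bound, $\cD'$ is heavy-tailed. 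The heavy-tail conclusion of \Cref{thm:componentProperties} then follows by applying this lemma to the product measure $\cD = (\cD')^d$, using the first-moment bound $\expect\norm{\eta}_\infty \ge \Omega(\epsilon\,d^{1/2-1/p}\,h(\delta))$ (Lemma~\ref{thm:lowerBound 1st moment}) to verify the hypothesis in sufficiently high dimension.

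I expect this to be essentially a standard maximum-of-i.i.d.\ moment estimate in the spirit of \citet{vershynin2018high}, so the only mildly delicate points are bookkeeping: confirming that $\tfrac{\alpha}{\alpha-1} = \tfrac{2p}{p+2}$ is bounded uniformly in $p>2$ (so that $c$ really is an absolute constant, independent of $p$), and placing the integral split at exactly $t_0 = A\,d^{1/\alpha}$ so that the trivial part and the tail part come out to the same order $A\,d^{1/2-1/p}$.
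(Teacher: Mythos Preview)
Your proof is correct and follows the same overall strategy as the paper: argue by contrapositive, write $\expect\max_i|X_i|$ via the layer-cake formula, and bound the resulting integral to show it is $O(d^{1/2-1/p}\epsilon h(\delta))$. The technical execution differs slightly. The paper splits the integral at the constant threshold $A=\epsilon h(\delta)/24$, keeps the exact expression $1-(1-G(x))^d$, and evaluates $\int_1^\infty\bigl[1-(1-t^{-\alpha})^d\bigr]\,dt$ in closed form as a ratio of Gamma functions, then invokes the asymptotic $\Gamma(d+1)/\Gamma(d+\tfrac{p+2}{2p})\sim d^{1/2-1/p}$. You instead split at the dimension-dependent threshold $t_0=A\,d^{1/\alpha}$ and use the elementary union bound $1-(1-G)^d\le dG$ on the tail. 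Your route is a bit cleaner: it avoids the Gamma-function identity and the asymptotic step, and it yields an explicit constant $\tfrac{\alpha}{\alpha-1}=\tfrac{2p}{p+2}<2$ valid for every $d\ge1$ and every $p>2$, whereas the paper's $\sim$ only gives the conclusion for sufficiently large $d$. The paper's route, on the other hand, shows that the bound is essentially sharp in the exponent of $d$ (since the Gamma asymptotic is two-sided). Either way the lemma follows.
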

\begin{proof}
Denote by $G(x)=\Pr_{X\sim \mathcal{D}'}[|X|>x]$ the complementary Cumulative Distribution Function (CDF) of $\mathcal{D}'$. We only need to show that ``$G(x)\le  \left(\frac{\epsilon h(\delta)}{24x}\right)^{2p/(p-2)}$ for all $x>\frac{\epsilon h(\delta)}{24}$'' implies ``$\mathbb{E} \max_{i\in [d]} |X_i|< Cd^{1/2-1/p}\epsilon h(\delta)$ for a constant $C>0$''. We note that
\begin{equation*}
\begin{split}
\mathbb{E} \max_{i\in[d]} |X_i|&=\int_0^\infty \Pr_{X_i\sim \mathcal{D}}\left[\max_{i\in[d]}|X_i|>x\right]dx\\
&=\int_0^{\frac{\epsilon h(\delta)}{24}} \Pr_{X_i\sim \mathcal{D}}\left[\max_{i\in[d]}|X_i|>x\right]dx+\int_\frac{\epsilon h(\delta)}{24}^\infty [1-(1-G(x))^d]dx\\
&\le \frac{\epsilon h(\delta)}{24}+\frac{\epsilon h(\delta)}{24}\int_1^\infty \left[1-\left(1-\frac{1}{t^{2p/(p-2)}}\right)^d\right]dt\\
&=\frac{\Gamma(\frac{p+2}{2p})\epsilon h(\delta)}{24}\frac{\Gamma(d+1)}{\Gamma(d+\frac{p+2}{2p})}\\
&\sim  d^{1/2-1/p}\epsilon h(\delta),
\end{split}
\end{equation*}
where the second equality holds because for any i.i.d. $Y_i\sim\cD$ with CDF $F(x)$, the CDF of $\max_{i\in[d]} Y_i$ is given by $(1-F(x))^d$, the first inequality holds by the change of variable, and the last $\sim$ relation holds because $\frac{\Gamma(d+1)}{\Gamma(d+\frac{p+2}{2p})}\sim d^{1/2-1/p}$.
\end{proof}

Combining Lemmas \ref{lem:coordinateBounds} and \ref{lemma: suff and necess cond for heavy tails} with Lemma \ref{lem:robustToTV} and the fact that $\mathbb{E}\|\eta\|_\infty\ge \Omega(\epsilon d^{1/2-1/p}\frac{1-\delta}{\delta})$ completes the proof of Theorem \ref{thm:componentProperties}.

\section{Experiments} \label{sec:experiments}

In this section, we evaluate the certified $\ell_\infty$ robustness and verify the tightness of our lower bounds by numerical experiments. Experiments run with two NVIDIA GeForce RTX 2080 Ti GPUs. We release our code and trained models at \url{https://github.com/hongyanz/TRADES-smoothing}.

\subsection{Certified $\ell_\infty$ robustness}

Despite the hardness results of random smoothing on certifying $\ell_\infty$ robustness with large perturbation radius, we evaluate the certified $\ell_\infty$ robust accuracy of random smoothing on the CIFAR-10 dataset when the perturbation radius is as small as 2/255, given that the data dimension $32\times 32\times 3$ is not too high relative to the $2$-pixel attack. The goal of this experiment is to show that random smoothing based methods might be hard to achieve very promising robust accuracy (e.g., $\ge 70\%$) even when the perturbation radius is as small as 2 pixels.

\medskip
\noindent{\textbf{Experimental setups.}}
Our experiments exactly follow the setups of \citep{salman2019provably}. Specifically, we train the models on the CIFAR-10 training set and test it on the CIFAR-10 test sets. We apply the ResNet-110 architecture~\citep{he2016deep} for the CIFAR-10 classification task. The output size of the last layer is 10. Our training procedure is a modification of \citep{salman2019provably}: \citet{salman2019provably} used adversarial training of \cite{madry2017towards} to train a soft-random-smoothing classifier by injecting Gaussian noise. In our training procedure, we replace the adversarial training with TRADES~\citep{zhang2019theoretically}, a state-of-the-art defense model which won the first place in the NeurIPS 2018 Adversarial Vision Challenge~\citep{brendel2020adversarial}. In particular, we minimize the empirical risk of the following loss:
\begin{equation*}
\label{equ: TRADES + random smoothing}
\begin{split}
\min_{f}\ &\mathbb{E}_{X,Y}\mathbb{E}_{\eta\sim\mathcal{N}(0,\sigma^2I)}\Big[\mathcal{L}(f(X+\eta),Y)+\beta\max_{X'\in\mathbb{B}_2(X,\epsilon)} \mathcal{L}(f(X+\eta),f(X'+\eta))\Big],
\end{split}
\end{equation*}
where $\eta$ is the injected Gaussian noise, $\mathcal{L}$ is the cross-entropy loss or KL divergence, $(X,Y)$ is the clean data with label, and $f$ is a neural network classifier which outputs the logits of an instance. For a fixed $f$, the inner maximization problem is solved by PGD iterations, and we update the parameters in the outer minimization and inner maximization problems alternatively.
In our training procedure, we set $\ell_2$ perturbation radius $\epsilon=0.435$, perturbation step size 0.007, number of PGD iterations 10, regularization parameter $\beta=6.0$, initial learning rate 0.1, standard deviation of injected Gaussian noise 0.12, batch size 256, and run 55 epochs on the training dataset. We decay the learning rate by a factor of 0.1 at epoch 50. We use random smoothing of \citet{cohen2019certified} to certify $\ell_2$ robustness of the base classifier. We obtain the $\ell_\infty$ certified radius by scaling the $\ell_2$ robust radius by a factor of $1/\sqrt{d}$. For fairness, we do not compare with the models using extra unlabeled data, ImageNet pretraining, or ensembling tricks.

\begin{table}[t]
	\centering
	\caption{Certified $\ell_\infty$ robustness at a radius of 2/255 on the CIFAR-10 dataset (without extra unlabelled data or pre-trained model).}
	\label{table: certified l_infty}
	%\resizebox{1.0\textwidth}{!}{%
\begin{tabular}{@{}r|cc@{}}
\toprule
Method & Certified Robust Accuracy & Natural Accuracy\\
\midrule
TRADES + Random Smoothing & 62.6\% & 78.8\%\\
\rowcolor[gray]{.9}
\citet{salman2019provably} & 60.8\% & 82.1\%\\
\citet{zhang2019towards} & 54.0\% & 72.0\%\\
\rowcolor[gray]{.9}
\citet{wong2018scaling} & 53.9\% & 68.3\%\\
\citet{mirman2018differentiable} & 52.2\% & 62.0\%\\
\rowcolor[gray]{.9}
\citet{gowal2018effectiveness} & 50.0\% & 70.2\%\\
\citet{xiao2018training} & 45.9\% & 61.1\%\\
\bottomrule
\end{tabular}
%}
\end{table}

\medskip
\noindent{\textbf{Experimental results.}} We compare TRADES + random smoothing with various baseline methods of certified $\ell_\infty$ robustness with radius $2/255$. We summarize our results in Table \ref{table: certified l_infty}. All results are reported according to the numbers in their original papers.\footnote{We report the performance of \citep{salman2019provably} according to the results: \url{https://github.com/Hadisalman/smoothing-adversarial/blob/master/data/certify/best_models/cifar10/ours/cifar10/DDN_4steps_multiNoiseSamples/4-multitrain/eps_255/cifar10/resnet110/noise_0.12/test/sigma_0.12}, which is the best result in the folder ``best models'' by \citet{salman2019provably}. When a method was not tested under the 2/255 threat model in its original paper, we will not compare with it as well in our experiment.} It shows that TRADES with random smoothing achieves state-of-the-art performance on certifying $\ell_\infty$ robustness at radius 2/255 and enjoys higher robust accuracy than other methods. However, for all approaches, there are still significant gaps between the robust accuracy and the desired accuracy that is acceptable in the security-critical tasks (e.g., robust accuracy $\ge 70\%$), even when the certified radius is chosen as small as 2 pixels.

\subsection{Effectiveness of lower bounds} For random smoothing,
Theorem \ref{thm:componentProperties} suggests that the certified $\ell_\infty$
robust radius $\epsilon$ be (at least) proportional to $\sigma/\sqrt{d}$, where
$\sigma$ is the standard deviation of injected noise. In this section, we verify
this dependency by numerical experiments on the CIFAR-10 dataset and Gaussian
noise.

\medskip
\noindent{\textbf{Experimental setups.}} We apply the ResNet-110
architecture~\citep{he2016deep} for classification.\footnote{The input size of
the architecture is adaptive by applying the adaptive pooling layer.} The output
size of the last layer is 10. We vary the size of the input images with
$32\times 32\times 3$, $48\times 48\times 3$, $64\times 64\times 3$ by calling
the \emph{resize} function. We keep the quantity $\sigma/(\sqrt{d}\epsilon)$ as
an absolute constant by setting the standard deviation $\sigma$ as $0.12$,
$0.18$, and $0.24$, and the $\ell_2$ perturbation radius as $0.435$, $0.6525$,
and $0.87$ in the TRADES training procedure for the three input sizes,
respectively. Our goal is to show that the accuracy curves of the three input
sizes behave similarly. In our training procedure, we set  perturbation step
size 0.007, number of perturbation iterations 10, regularization parameter
$\beta=6.0$, learning rate 0.1, batch size 256, and run 55 epochs on the
training dataset. We use random smoothing~\citep{cohen2019certified} with varying
$\sigma$'s to certify the $\ell_2$ robustness. The $\ell_\infty$ certified
radius is obtained by scaling the $\ell_2$ robust radius by a factor of
$1/\sqrt{d}$.

We summarize our results in Figure \ref{figure: tightness}. We observe that the
three curves of varying input sizes behave similarly. This empirically supports
our theoretical finding in Theorem \ref{thm:componentProperties} that the
certified $\ell_\infty$ robust radius $\epsilon$ should be proportional to the
quantity $\sigma/\sqrt{d}$. In Figure \ref{figure: tightness}, the certified
accuracy is monotonously decreasing until reaching some point where it plummets
to zero. The phenomenon has also been observed by \citet{cohen2019certified} and
was explained by a hard upper limit to the radius we can certify, which is
achieved when all samples are classified by $f$ as the same class.

\begin{figure}
\centering
\includegraphics[scale=0.55]{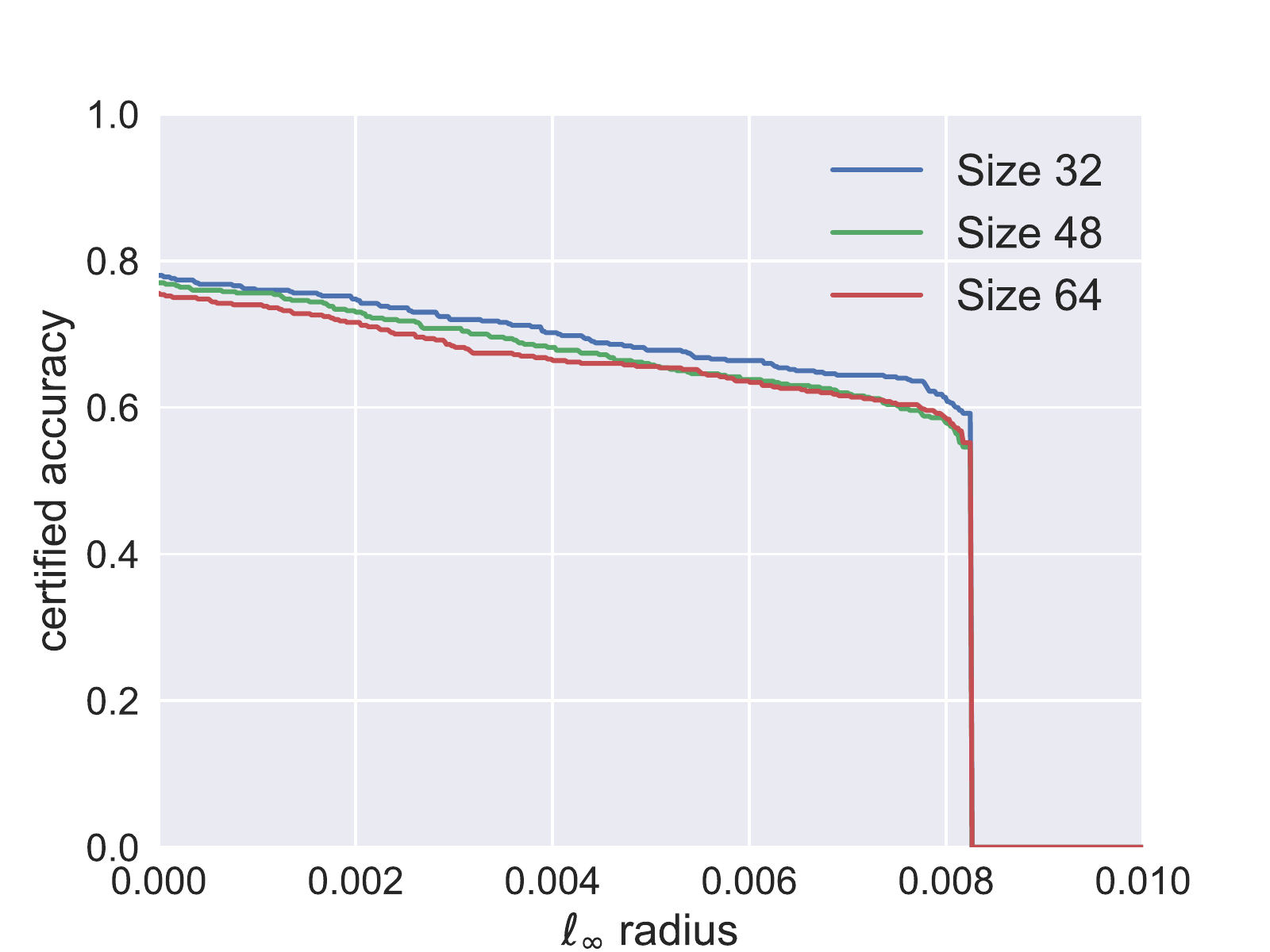}
\caption{Certified accuracy of ResNet-110 models under varying input sizes by random smoothing.}
\label{figure: tightness}
%\vspace{+0.2cm}
\end{figure}

\section{Conclusions} \label{sec:conclusion}
In this paper, we show a hardness result of random smoothing on certifying adversarial robustness against attacks in the $\ell_p$ ball of radius $\epsilon$ when $p>2$. We focus on a lower bound on the necessary noise magnitude: \emph{any} noise distribution $\cD$ over $\reals^d$ that provides $\ell_p$ robustness with $p>2$ for all base classifiers must satisfy $\expect\eta_i^2=\Omega(d^{1-2/p}\epsilon^2(1-\delta)/\delta^2)$ for 99\% of the features (pixels) of vector $\eta$ drawn from $\cD$, where $\delta$ is the score gap between the highest-scored class and the runner up in the framework of random smoothing. For high-dimensional images where the pixels are bounded in $[0,255]$, the required noise will eventually dominate the useful information in the images, leading to  trivial smoothed classifiers.

The proof roadmap of our results shows that defending against adversarial
attacks in the $\ell_p$ ball of radius $\epsilon$ is almost as hard as defending
against attacks in the $\ell_2$ ball of radius $\epsilon d^{1/2-1/p}$, for random smoothing. We thus
suggest combining random smoothing with dimensionality reduction techniques, such as principal component analysis or auto-encoder, to
circumvent our hardness results, which is left open as future works. Another
related open question is whether one can improve our lower bounds, or show that the bounds are tight.

\paragraph{Acknowledgments.} This work was supported in part by the National Science Foundation under grant CCF-1815011 and by the Defense Advanced Research Projects Agency under cooperative agreement HR00112020003.  The views expressed in this work do not necessarily reflect the position or the policy of the Government and no official endorsement should be inferred. Approved for public release; distribution is unlimited.

\newpage
\appendix

\section{Total-Variation based Robustness} \label{app:TV}
First, we argue that for any points $x$ and $x'$, we must have $g(x) = g(x')$
whenever the total variation distance between $\cD + x$ and $\cD + x'$ is
sufficiently small compared to the gap $\Delta(x)$, where $\cD + x$ denotes the
distribution of $\eta + x$ with $\eta \sim \cD$.
\begin{lemma} \label{lem:directionalRobustness}
  For any distribution $\cD$ on $\reals^d$, base classifier $f : \reals^d \to
  \cY$, and pair of points $x, x' \in \reals^d$, if $\Delta(x) > 2 \tv(\cD + x,
  \cD + x')$, then we have $g(x) = g(x')$.
\end{lemma}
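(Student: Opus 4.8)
The plan is to reduce the claim to the elementary fact that the class scores $G_y(\cdot)$ are Lipschitz in the data point with respect to the total variation distance of the translated smoothing measure. First I would rewrite, for each class $y$,
\[
G_y(x) = \prob_{\eta \sim \cD}(f(x+\eta) = y) = \prob_{z \sim \cD + x}\bigl(z \in f^{-1}(y)\bigr),
\]
so that each score is simply the mass that the shifted distribution $\cD + x$ assigns to the decision region $f^{-1}(y) = \{z \in \reals^d : f(z) = y\}$.

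Next I would invoke the variational characterization of total variation distance: for any measurable set $S \subseteq \reals^d$, $\bigl|\prob_{z \sim \cD+x}(z \in S) - \prob_{z \sim \cD+x'}(z \in S)\bigr| \le \tv(\cD+x, \cD+x')$. Applying this with $S = f^{-1}(y)$ for every $y$ yields $|G_y(x) - G_y(x')| \le \tv(\cD+x,\cD+x') =: t$ simultaneously for all classes $y \in \cY$. (If $f$ is randomized, the same bound still holds, since averaging over the internal randomness of $f$ is a post-processing operation that cannot increase total variation distance.)

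Then I would finish with a short $\argmax$ argument. Write $a = g(x)$. By definition of the gap, $b = \argmax_{y \ne a} G_y(x)$ is the runner-up, hence $G_a(x) - G_y(x) \ge G_a(x) - G_b(x) = \Delta(x) > 2t$ for every $y \ne a$. Combining this with the two-sided score bound from the previous step, for every $y \ne a$ we obtain
\[
G_a(x') - G_y(x') \ge \bigl(G_a(x) - t\bigr) - \bigl(G_y(x) + t\bigr) = \bigl(G_a(x) - G_y(x)\bigr) - 2t > 0,
\]
so $a$ is the strict maximizer of $G_\cdot(x')$ and therefore $g(x') = a = g(x)$, as claimed.

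There is essentially no hard step here; the only points requiring a little care are (i) ensuring the bound on $|G_y(x) - G_y(x')|$ holds uniformly over all classes rather than for a single worst class, which is automatic since the same $t$ bounds every set difference, and (ii) obtaining a \emph{strict} inequality in the final display so that ties in the $\argmax$ cannot arise — this is precisely where the factor of $2$ in the hypothesis $\Delta(x) > 2\tv(\cD+x,\cD+x')$ is used.
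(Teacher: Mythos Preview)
Your proposal is correct and follows essentially the same approach as the paper: both arguments bound $|G_y(x)-G_y(x')|$ by $\tv(\cD+x,\cD+x')$ via the variational characterization of total variation, then use the strict gap $\Delta(x)>2t$ to show the top class at $x$ remains the strict $\argmax$ at $x'$. Your presentation is slightly more explicit in casting $G_y$ as the mass of a decision region and in handling randomized $f$, but the logic is the same.
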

\begin{proof}
  To simplify notation, let $\delta = \tv(\cD + x, \cD + x')$ and let $\eta$ be
  a sample from $\cD$ so that $\eta + x$ is a sample from $\cD + x$ and $\eta +
  x'$ is a sample from $\cD + x'$. By the definition of the total variation
  distance, for any class $y \in \cY$, we have $\delta \geq \bigl|\prob\bigl(f(x
  + \eta) = y\bigr) - \prob\bigl(f(x' + \eta) = y\bigr)\bigr| = |G_y(x) -
  G_y(x')|$. Now let $y = g(x)$ and $y' \neq y$. Then we have
  \begin{equation*}
  \begin{split}
  G_y(x')
  &\geq G_y(x) - \delta\\
  &\geq G_{y'}(x) - \delta + \Delta(x)\\
  &\geq G_{y'}(x') - 2\delta + \Delta(x).
  \end{split}
  \end{equation*}
  Whenever $\Delta(x) > 2\delta$, we are guaranteed that $G_y(x') > G_{y'}(x')$
  for all $y'$, and it follows that $g(x') = y = g(x)$.
\end{proof}

As a consequence of Lemma \ref{lem:directionalRobustness}, we can provide certified
robustness guarantees for the smoothed classifier $g$ in terms of balls defined
by the total variation distance. In particular, for any $x \in \reals^d$ and any
$\delta \in (0,1]$, define
\[
\tvball(x, \delta; \cD) = \{x' \in \reals^d: \tv(\cD + x, \cD + x') < \delta\}
\]
to be the set of points $x'$ around $x$ such that the distributions $\cD + x$
and $\cD + x'$ have total variation distance at most $\delta$. When the
distribution $\cD$ is clear from context, we will simply write $\tvball(x,
\delta)$.

\begin{corollary} \label{cor:tvballRobust}
  For any distribution $\cD$, base classifier $f$, and $x \in \reals^d$ we have
  $g(x') = g(x)$ for all $x' \in \tvball(x, \Delta(x)/2)$.
\end{corollary}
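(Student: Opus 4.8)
The plan is to derive Corollary \ref{cor:tvballRobust} as an immediate consequence of Lemma \ref{lem:directionalRobustness}, which is the substantive ingredient already in hand. The corollary is essentially a repackaging of that lemma in the language of the total-variation ball $\tvball(x,\delta;\cD)$, so no new estimates are needed.

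First I would fix an arbitrary $x' \in \tvball(x, \Delta(x)/2)$. By the definition of $\tvball(x,\delta;\cD)$ with $\delta = \Delta(x)/2$, this membership means precisely that $\tv(\cD + x, \cD + x') < \Delta(x)/2$. Multiplying through by $2$, this rearranges to $\Delta(x) > 2\,\tv(\cD + x, \cD + x')$, which is exactly the hypothesis required to invoke Lemma \ref{lem:directionalRobustness} for this pair of points $x, x'$. Applying that lemma then yields $g(x') = g(x)$.

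Since $x'$ was an arbitrary point of $\tvball(x, \Delta(x)/2)$, this establishes $g(x') = g(x)$ for every such $x'$, which is the claimed statement. I do not anticipate any obstacle here: the only subtlety is the factor-of-two bookkeeping between the strict inequality $\tv(\cD+x,\cD+x') < \Delta(x)/2$ defining the ball and the condition $\Delta(x) > 2\,\tv(\cD+x,\cD+x')$ in the lemma, which matches up exactly. (Note the use of the strict inequality in the definition of $\tvball$ is what makes the lemma's strict hypothesis available; had the ball been defined with $\leq$, one would need to be slightly more careful, but as stated everything goes through verbatim.)
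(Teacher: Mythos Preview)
Your proposal is correct and matches the paper's approach exactly: the paper states this as an immediate corollary of Lemma~\ref{lem:directionalRobustness} without even writing out a proof, and your argument is precisely the one-line unpacking of the definition of $\tvball$ followed by an appeal to that lemma. Your observation about the strict inequality is accurate and the bookkeeping is right.
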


Note that the ball $\tvball(x, \delta)$ is translation invariant (i.e., for any
center $x \in \reals^d$, we have $\tvball(x, \delta) = \tvball(0, \delta) + x$)
and the definition of the ball only depends on the distribution $\cD$.
Therefore, if we can relate the balls for a given distribution $\cD$ to those of
a norm $\norm{\cdot}_p$, then Corollary \ref{cor:tvballRobust} implies robustness with
respect to that norm. Let $\ball{p}(x, r) = \{ x' \in \reals^d: \norm{x' -
x}_p < r \}$ denote the $\ell_p$ ball of radius $r$ centered at $x$.

\begin{corollary}
\label{cor: one direction of tv based robustness}
  Fix any $p > 0$, radius $\epsilon \geq 0$, distribution $\cD$, and let $\delta
  \in [0,1]$ be the smallest total variation bound such that $\ball{p}(0,r)
  \subseteq \tvball(0, \delta)$. For any base classifier $f : \reals^d \to \cY$
  and any point $x \in \reals^d$ with $\Delta(x) > 2\delta$, for all $x' \in
  \ball{p}(x,r)$ we have $g(x') = g(x)$.
\end{corollary}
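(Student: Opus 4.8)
The plan is to deduce this directly from Corollary~\ref{cor:tvballRobust} by converting the abstract total-variation-ball hypothesis there into the concrete $\ell_p$-ball hypothesis here. The two ingredients are the translation invariance of $\tvball(\cdot,\delta)$ and its monotonicity in the radius parameter $\delta$, both of which are immediate from the definition $\tvball(x,\delta)=\{x'\in\reals^d : \tv(\cD+x,\cD+x')<\delta\}$ (a larger $\delta$ enlarges the constraint set, and replacing $x$ by $x+w$ simultaneously shifts both arguments of $\tv(\cD+x,\cD+x')$).

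First I would fix a base classifier $f$ and a point $x\in\reals^d$ with $\Delta(x)>2\delta$. By the translation invariance noted just before the statement, $\tvball(x,\delta)=\tvball(0,\delta)+x$, and likewise the $\ell_p$ ball satisfies $\ball{p}(x,r)=\ball{p}(0,r)+x$. Adding the vector $x$ to both sides of the assumed containment $\ball{p}(0,r)\subseteq\tvball(0,\delta)$ therefore yields $\ball{p}(x,r)\subseteq\tvball(x,\delta)$.

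Next, since $\Delta(x)>2\delta$ gives $\delta<\Delta(x)/2$, monotonicity of the total-variation ball in its radius argument gives $\tvball(x,\delta)\subseteq\tvball(x,\Delta(x)/2)$, hence $\ball{p}(x,r)\subseteq\tvball(x,\Delta(x)/2)$. Applying Corollary~\ref{cor:tvballRobust}, which asserts $g(x')=g(x)$ for every $x'\in\tvball(x,\Delta(x)/2)$, and restricting this to the subset $\ball{p}(x,r)$ completes the argument.

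There is no substantial obstacle here: the statement is essentially a reformulation of Corollary~\ref{cor:tvballRobust} that trades an abstract hypothesis for a norm-based one, and the only point requiring a little care is lifting the containment from the origin to an arbitrary center $x$ — which is exactly what translation invariance of $\tvball$ supplies. (One can also observe that the ``smallest'' such $\delta$ in the hypothesis is used only as an upper bound in the chain above, so no minimality or attainment argument is needed.)
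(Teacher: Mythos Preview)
Your proof is correct and follows exactly the route the paper intends: the paper does not give a formal proof of this corollary but presents it as an immediate consequence of Corollary~\ref{cor:tvballRobust} together with the translation invariance of $\tvball(x,\delta)$, which is precisely what you unpack. Your added remark that only the upper-bound role of $\delta$ is used (so minimality is irrelevant) is a fair observation about the statement as written.
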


The following lemma is in an opposite direction as Corollary \ref{cor: one direction of tv based robustness}.

\lemRobustToTV*

\begin{proof}
  Suppose there exists a vector $v \in \cA$ such that $\tv(\cD, \cD+v) >
  \delta$. We show that this implies there is a randomized binary classifier $f
  : \reals^d \to \cY$, such that $g(\cdot; \cD, f)$ is not $(\cA,
  \delta)$-robust. It follows that if $g(\cdot; \cD ,f)$ is $(\cA,
  \delta)$-robust for all randomized classifiers $f$, then we must have
  $\tv(\cD, \cD + v) \leq \delta$ for all $v \in \cA$.

  Fix any $v \in \cA$ such that $\tv(\cD, \cD + v) > \delta$ and let $\cD' = \cD
  + v$ be shorthand notation for the translated distribution. Since $\delta <
  \tv(\cD, \cD') = \sup_{\cS \subseteq \reals^d} \cD(\cS) - \cD'(\cS)$, there
  exists a set $\cS \subseteq \reals^d$ so that $\cD(\cS) - \cD'(\cS) > \delta$.
  Let $\cS^c = \{x \in \reals^d: x \not \in \cS\}$ denote the complement of
  $\cS$. We assume without loss of generality that $\cY = \{0,1\}$ and define
  the randomized classifier $f$ to take value $1$ with probability $\alpha$ and
  $0$ with probability $1-\alpha$ for all $x \in S$ and to take value $1$ with
  probability $\beta$ and $0$ with probability $1-\beta$ for all $x \in
  \mathcal{S}^c$, where $\alpha = 1 - \frac{\cD'(\mathcal{S})}{2}$ and $\beta =
  \frac{1}{2} - \frac{\cD'(\mathcal{S})}{2}$. That is,
  \[
  f(x) = \begin{cases}
    \text{$1$ w.p. $\alpha$ and $0$ otherwise}, & \text{if $x \in \mathcal{S}$}; \\
    \text{$1$ w.p. $\beta$ and $0$ otherwise}, & \text{if $x \in \mathcal{S}^c$}.
  \end{cases}
  \]
  Note that since $\cD'(\mathcal{S}) \in [0,1]$ we have that $\alpha \in [\frac{1}{2}, 1]$
  and $\beta \in [0,\frac{1}{2}]$ are both valid probabilities. For any
  distribution $P$ over $\reals^d$ we have
  \begin{align*}
  \prob_{Z \sim P}(f(Z) = 1)
  &= \prob_{Z \sim P}\bigl(f(Z) = 1 \mid Z \in \mathcal{S}\bigr) \cdot P(\mathcal{S})
    + \prob_{Z \sim P}\bigl(f(Z) = 1 \mid Z \in \mathcal{S}^c\bigr) \cdot P(\mathcal{S}^c) \\
  &= \alpha P(\mathcal{S}) + \beta P(\mathcal{S}^c) \\
  &= P(\mathcal{S}) \cdot (\alpha - \beta) + \beta \\
  &= \frac{1}{2} + \frac{P(\mathcal{S}) - \cD'(\mathcal{S})}{2}.
  \end{align*}
  Therefore, we have $G_1(0; \cD, f) = \prob_{Z \sim \cD}\bigl(f(Z) = 1\bigr) =
  \frac{1}{2} + \frac{\cD(\mathcal{S}) - \cD'(\mathcal{S})}{2} > \frac{1}{2} +
  \frac{\delta}{2}$. Similarly, we have that $G_1(v; \cD, f) = \prob_{Z \sim
  \cD}\bigl(f(Z + v) = 1\bigr) = \prob_{Z \sim \cD'}\bigl(f(Z) = 1\bigr) =
  \frac{1}{2}$. It follows that $g(0;\cD,f) = 1$, $\Delta(0;\cD,f) = 2
  G_1(0;\cD,f) - 1 > 2 (\frac{1}{2} + \frac{\delta}{2}) - 1 = \delta$, and
  $g(v;\cD,f) = 0$ (since $G_1(v;\cD,f) = G_0(v;\cD,f) = \frac{1}{2}$ and the
  ties are broken lexicographically). It follows that $g(\cdot; \cD, f)$ is not
  robust to the adversarial translation $v \in \cA$, as required.
\end{proof}

\section{Total Variation Bounds for Specific Distributions}

\subsection{Isotropic Gaussian}

In this section we give bounds for the total variation distance between shifted
copies of Gaussian distributions with an isotropic covariance matrix. Our
results are derived from the following theorem due to
\citet{Devroye18:GaussianTV}.

\begin{theorem}[Theorem 1.2 of \citet{Devroye18:GaussianTV}]\label{thm:guassianTV}
  Suppose $d > 1$, let $\mu_1 \neq \mu_2 \in \reals^d$ and let $\Sigma_1,
  \Sigma_2$ be positive definite $d \times d$ matrices. Let $v = \mu_1 - \mu_2$
  and let $\Pi$ be a $d \times d-1$ matrix whose columns form a basis for the
  subspace orthogonal to $v$. Define the function
  \begin{equation*}
  tv(\mu_1, \Sigma_1, \mu_2, \Sigma_2)=\max \Bigg\{
  \norm{(\Pi^\top \Sigma_1 \Pi)^{-1} \Pi^\top \Sigma_2 \Pi - I_{d-1}}_F,\frac{|v^\top (\Sigma_1 - \Sigma_2) v|}{v^\top \Sigma_1 v},\frac{v^\top v}{\sqrt{v^\top \Sigma_1 v}}
  \Bigg\},
  \end{equation*}
  where $\norm{\cdot}_F$ denotes the Frobenius norm and $I_{d-1}$ is the
  $(d-1)$-dimensional identity matrix. Then we have
  \[
  \frac{1}{200} \leq \frac{\tv(\normal(\mu_1, \Sigma_1), \normal(\mu_2, \Sigma_2))}{\min\{1, tv(\mu_1, \Sigma_1, \mu_2, \Sigma_2)\}} \leq \frac{9}{2}.
  \]
\end{theorem}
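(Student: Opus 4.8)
The plan is to prove matching upper and lower bounds on $\tv(\normal(\mu_1,\Sigma_1),\normal(\mu_2,\Sigma_2))$ after reducing to a canonical form. Because total variation distance is invariant under invertible affine maps, I would first translate by $-\mu_2$ and then apply the whitening map $x\mapsto\Sigma_1^{-1/2}x$, which reduces the problem to bounding $\tv(\normal(w,I),\normal(0,S))$ with $w=\Sigma_1^{-1/2}(\mu_1-\mu_2)$ and $S=\Sigma_1^{-1/2}\Sigma_2\Sigma_1^{-1/2}$; a further rotation puts $w$ along $e_1$. In these coordinates the three quantities in the statement become transparent --- $\norm{w}_2$ controls the mean separation, $S_{11}-1$ controls the covariance mismatch along the mean direction, and $\Pi^\top S\Pi-I_{d-1}$, with $\Pi$ an orthonormal basis of $e_1^\perp$, controls the mismatch orthogonal to it --- so it suffices to show $\tv(\normal(w,I),\normal(0,S))\asymp\min\{1,\max\{\norm{w}_2,|S_{11}-1|,\norm{\Pi^\top S\Pi-I_{d-1}}_F\}\}$ up to absolute constants and then translate the quantities back to the original parametrization.

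For the upper bound I would decouple the change of mean from the change of covariance by the triangle inequality, $\tv(\normal(w,I),\normal(0,S))\le\tv(\normal(w,I),\normal(0,I))+\tv(\normal(0,I),\normal(0,S))$. The first term is a one-dimensional Gaussian comparison equal to $2\Phi(\norm{w}_2/2)-1\le\min\{1,\norm{w}_2/\sqrt{2\pi}\}$. For the second term I would apply Pinsker's inequality $\tv\le\sqrt{\tfrac12\,\mathrm{KL}}$ with $\mathrm{KL}(\normal(0,S)\,\|\,\normal(0,I))=\tfrac12\sum_i(\lambda_i-1-\log\lambda_i)$, where $\lambda_1,\dots,\lambda_d$ are the eigenvalues of $S$, and split the eigenvalues into those within a constant factor of $1$ --- where $\lambda_i-1-\log\lambda_i\asymp(\lambda_i-1)^2$, producing the Frobenius term --- and those outside that range, where one simply uses $\tv\le1$. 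This yields the claimed bound $\tv\le\tfrac92\min\{1,tv\}$.

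The lower bound is the main obstacle, and I would prove it by exhibiting explicit distinguishing events and taking the best of them. When the mean term dominates, the half-space $\{x_1>t\}$ with $t\asymp\norm{w}_2$ separates the two laws by $\Omega(\min\{1,\norm{w}_2\})$, provided $S_{11}$ is within a constant factor of $1$ (otherwise the covariance term already supplies the bound). When the covariance term dominates, pass to the eigenbasis of $S$: a single-coordinate event $\{x_i^2>t\}$ handles any coordinate with $|\lambda_i-1|=\Omega(1)$, while the delicate regime --- many coordinates each with $|\lambda_i-1|$ small --- is handled by the quadratic-form event $\bigl\{\sum_i(\lambda_i-1)x_i^2>t\bigr\}$. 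The statistic $\sum_i(\lambda_i-1)x_i^2$ has means differing by $\sum_i(\lambda_i-1)^2=\norm{S-I}_F^2$ between the two laws, with standard deviation $O(\norm{S-I}_F)$ under each, so a second-moment (Paley--Zygmund) or Berry--Esseen argument gives a constant separation when $\norm{S-I}_F=\Omega(1)$ and an $\Omega(\norm{S-I}_F)$ separation otherwise. Taking the maximum of the mean-based and covariance-based lower bounds reproduces $\min\{1,\max\{\dots\}\}$, matching the upper bound up to an absolute constant.

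Two points will need the most care. First, the anticoncentration estimate for the quadratic form must come with an absolute constant that does not degrade with $d$, which requires controlling the fourth moment (or the Berry--Esseen error) of $\sum_i(\lambda_i-1)x_i^2$ uniformly in the dimension. Second, coordinates that carry a simultaneous mean shift and covariance change must be treated so that the two effects cannot cancel: I would reduce to the explicit one-dimensional comparison $\tv(\normal(m,1),\normal(0,\lambda))$ and lower bound it by $\Omega(\min\{1,|m|\}+\min\{1,|\lambda-1|/\lambda\})$, which rules out spurious cancellation and lets the coordinatewise estimates be assembled into the stated two-sided inequality.
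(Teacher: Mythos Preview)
The paper does not prove this statement at all: it is quoted verbatim as Theorem~1.2 of \citet{Devroye18:GaussianTV} and used only as a black box to derive the corollaries that follow. There is therefore no ``paper's own proof'' to compare your proposal against.

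That said, your sketch is broadly the right shape for how the cited result is established --- reduce by affine invariance to $\tv(\normal(w,I),\normal(0,S))$, upper-bound via Pinsker and the closed-form KL, and lower-bound by exhibiting explicit test events (a half-space for the mean shift, a quadratic form for the covariance shift). One point to be careful about: after whitening by $\Sigma_1^{-1/2}$ the natural mean parameter is $\norm{w}_2=\sqrt{v^\top\Sigma_1^{-1}v}$, whereas the third term in the statement is $v^\top v/\sqrt{v^\top\Sigma_1 v}$; these do not coincide in general, so your identification of the three terms with $\norm{w}_2$, $|S_{11}-1|$, and $\norm{\Pi^\top S\Pi-I_{d-1}}_F$ needs to be checked against the exact parametrization used in the cited paper (and one should verify that the $\Pi$ in the statement is orthonormal, or else track the resulting scaling). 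The anticoncentration step for the quadratic form, which you correctly flag as the delicate part, is indeed where most of the work in the original proof lies.
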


\Cref{thm:guassianTV} takes a simpler form when $\Sigma_1 = \Sigma_2 = \sigma^2
I$ and $\mu_1 = 0$ because then the first and last terms in the max of
$tv(\mu_1, \Sigma_1, \mu_2, \Sigma_2)$ are zero, giving the following:

\begin{corollary}
  Suppose $d > 1$ and let $v \in \reals^d$ and $\sigma > 0$. Then
  \begin{equation*}
  \begin{split}
  \frac{1}{200} \cdot \min\left\{1, \frac{\norm{v}_2}{\sigma} \right\}
  &\leq \tv\bigl(\normal(0, \sigma^2I), \normal(v, \sigma^2I)\bigr)\leq \frac{9}{2} \cdot \min\left\{1, \frac{\norm{v}_2}{\sigma} \right\}.
  \end{split}
  \end{equation*}
\end{corollary}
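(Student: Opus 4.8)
The plan is to obtain this corollary as a direct specialization of \Cref{thm:guassianTV} (Theorem 1.2 of \citet{Devroye18:GaussianTV}) to the case of equal isotropic covariances $\Sigma_1 = \Sigma_2 = \sigma^2 I$. First I would dispose of the degenerate case $v = 0$: there $\normal(0,\sigma^2 I)$ and $\normal(v,\sigma^2 I)$ are the same distribution, so their total variation distance is $0$, while $\min\{1, \norm{v}_2/\sigma\} = 0$ as well, and the claimed chain of inequalities becomes $0 \le 0 \le 0$. For the remainder assume $v \neq 0$, so that the hypothesis $\mu_1 \neq \mu_2$ of \Cref{thm:guassianTV} is satisfied (the hypothesis $d > 1$ is given).

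Next I would invoke \Cref{thm:guassianTV} with $\mu_1 = v$, $\mu_2 = 0$, and $\Sigma_1 = \Sigma_2 = \sigma^2 I$, using that total variation distance is symmetric in its two arguments so that $\tv(\normal(0,\sigma^2 I),\normal(v,\sigma^2 I))$ is what the theorem bounds. With this choice the difference vector is $\mu_1 - \mu_2 = v$, and I would evaluate the three terms inside $tv(\mu_1,\Sigma_1,\mu_2,\Sigma_2)$ one at a time. For the first term, since $\Sigma_1 = \Sigma_2 = \sigma^2 I$ we get $(\Pi^\top \Sigma_1 \Pi)^{-1}\Pi^\top \Sigma_2 \Pi = (\sigma^2 \Pi^\top\Pi)^{-1}(\sigma^2 \Pi^\top\Pi) = I_{d-1}$ for any full-rank $\Pi$ whose columns span $v^\perp$, so $\norm{I_{d-1} - I_{d-1}}_F = 0$. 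The second term has numerator $|v^\top(\Sigma_1 - \Sigma_2)v| = 0$ and positive denominator $v^\top \Sigma_1 v = \sigma^2\norm{v}_2^2$, hence equals $0$. The third term is $\frac{v^\top v}{\sqrt{v^\top \Sigma_1 v}} = \frac{\norm{v}_2^2}{\sigma\norm{v}_2} = \frac{\norm{v}_2}{\sigma}$.

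Therefore $tv(\mu_1,\Sigma_1,\mu_2,\Sigma_2) = \max\{0,0,\norm{v}_2/\sigma\} = \norm{v}_2/\sigma$, so $\min\{1, tv(\mu_1,\Sigma_1,\mu_2,\Sigma_2)\} = \min\{1, \norm{v}_2/\sigma\}$. Substituting this into the two-sided bound of \Cref{thm:guassianTV} gives exactly $\frac{1}{200}\min\{1,\norm{v}_2/\sigma\} \le \tv(\normal(0,\sigma^2 I),\normal(v,\sigma^2 I)) \le \frac{9}{2}\min\{1,\norm{v}_2/\sigma\}$, as claimed. There is no genuine obstacle in this argument; the only points requiring a moment's attention are the separate handling of $v = 0$ (necessary because \Cref{thm:guassianTV} assumes $\mu_1 \neq \mu_2$) and the observation that the Frobenius-norm term vanishes identically whenever $\Sigma_1 = \Sigma_2$, regardless of the chosen basis $\Pi$ for $v^\perp$.
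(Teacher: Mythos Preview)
Your proof is correct and follows exactly the approach the paper intends: specialize \Cref{thm:guassianTV} to $\Sigma_1=\Sigma_2=\sigma^2 I$, observe that two of the three terms in the max vanish, and read off the bound. In fact your identification of which terms vanish (the Frobenius term and the $|v^\top(\Sigma_1-\Sigma_2)v|$ term) is more accurate than the paper's own one-line justification, which says ``the first and last terms'' are zero when it should say the first two; your handling of the $v=0$ case is also a nice touch the paper omits.
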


We can use this result to show that the variance bounds given by
Lemma \ref{lem:coordinateBounds} are nearly tight, except for the dependence on the
total variation bound, $\delta$.

\begin{corollary}
  \label{cor: tightness of Gaussian}
  Fix any $d$, radius $\epsilon > 0$, total variation bound $\delta
  \in [0,1]$, and $p \geq 2$. Setting $\sigma = \frac{9}{2}
  \frac{\epsilon}{\delta} d^{1/2 - 1/p}$ guarantees that for all $v \in
  \reals^d$ with $\norm{v}_p \leq \epsilon$ we have $\tv(\normal(0, \sigma^2 I),
  \normal(v, \sigma^2 I)) \leq \delta$. Moreover, if $\eta \sim \normal(0,
  \sigma^2I)$ then $\expect[\eta_i^2] = \sigma^2 = (\frac{9}{2})^2 \cdot
  \frac{\epsilon^2}{\delta^2} \cdot d^{1-2/p}$ and $\expect[\norm{\eta}_2] \leq
  \frac{9}{2} \frac{\epsilon}{\delta} \cdot d^{1 - 1/p}$.
\end{corollary}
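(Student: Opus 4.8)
The plan is to combine the two-sided total-variation estimate in the preceding corollary with a standard $\ell_p$-to-$\ell_2$ norm comparison. First I would invoke only the upper bound from that corollary, namely $\tv(\normal(0,\sigma^2 I),\normal(v,\sigma^2 I)) \le \frac{9}{2}\min\{1,\norm{v}_2/\sigma\} \le \frac{9}{2}\norm{v}_2/\sigma$, so that it suffices to control $\norm{v}_2$ uniformly over the $\ell_p$ ball $\{v : \norm{v}_p \le \epsilon\}$.

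Next I would use the inequality $\norm{v}_2 \le d^{1/2-1/p}\norm{v}_p$, which holds for every $p \ge 2$; this is the power-mean (Hölder) inequality applied to the vector $(|v_1|^2,\dots,|v_d|^2)$ with exponents $p/2$ and its conjugate. Consequently $\norm{v}_p \le \epsilon$ forces $\norm{v}_2 \le \epsilon\, d^{1/2-1/p}$, and plugging in the prescribed $\sigma = \frac{9}{2}\frac{\epsilon}{\delta}d^{1/2-1/p}$ gives $\tv(\normal(0,\sigma^2 I),\normal(v,\sigma^2 I)) \le \frac{9}{2}\cdot\frac{\epsilon\, d^{1/2-1/p}}{\sigma} = \delta$, which is the first assertion.

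For the moment bounds I would observe that when $\eta \sim \normal(0,\sigma^2 I)$ each coordinate is marginally $\normal(0,\sigma^2)$, so $\expect[\eta_i^2] = \sigma^2$, and substituting the chosen $\sigma$ yields $\expect[\eta_i^2] = (\tfrac{9}{2})^2\tfrac{\epsilon^2}{\delta^2}d^{1-2/p}$. Then Jensen's inequality gives $\expect[\norm{\eta}_2] \le \sqrt{\expect\norm{\eta}_2^2} = \sqrt{d\sigma^2} = \sqrt{d}\,\sigma = \frac{9}{2}\frac{\epsilon}{\delta}d^{1-1/p}$, completing the proof.

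There is no substantive obstacle here; the only point requiring care is the direction of the norm inequality — one needs $\norm{v}_2 \le d^{1/2-1/p}\norm{v}_p$ rather than the reverse bound $\norm{v}_p \le \norm{v}_2$ — and it is precisely this $d^{1/2-1/p}$ factor that produces the dimension-dependent inflation of $\sigma$ and matches the $d^{1/2-1/p}$ scaling appearing in the lower bounds of \Cref{thm:lowerBound} and \Cref{lem:coordinateBounds}, showing that the Gaussian construction is optimal up to the dependence on $\delta$.
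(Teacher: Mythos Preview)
Your proposal is correct and matches the paper's proof essentially line for line: the paper uses the same norm comparison $\norm{v}_2 \le \epsilon\, d^{1/2-1/p}$ for $\norm{v}_p \le \epsilon$ together with the upper bound of the preceding corollary, and then applies Jensen's inequality exactly as you do to bound $\expect[\norm{\eta}_2]$.
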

\begin{proof}
  Since for every $v \in \reals^d$ with $\norm{v}_p \leq \epsilon$ we have
  $\norm{v}_2 \leq \epsilon \cdot d^{1/2 - 1/p}$, it is sufficient to choose
  $\sigma$ as in the statement. To bound $\expect[\norm{\eta}_2]$, we use
  Jensen's inequality: $\expect[\norm{\eta}_2] \leq
  \sqrt{\expect[\norm{\eta}_2^2]} = \sqrt{d}\sigma = \frac{9}{2}
  \frac{\epsilon}{\delta} d^{1-1/p}$.
\end{proof}

\subsection{Uniform Distribution on $\ball{\infty}(0,r)$}

In this section, let $\uniform_r$ denote the uniform distribution on
$\ball{\infty}(0,r)$ with density $p_r(x) = \frac{1}{(2r)^d} \ind\{x \in
\ball{\infty}(0,r)\}$.

\begin{lemma}
  For any dimension $d$, any vector $v \in \reals^d$, and any radius $r \geq 0$,
  we have
  \[
  \tv(\uniform_r, \uniform_r + v)
  = 1 - \prod_{i=1}^d \max\left\{0, 1 - \frac{|v_i|}{2r}\right\}.
  \]
\end{lemma}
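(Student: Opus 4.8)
The plan is to use the standard overlap identity $\tv(P,Q) = 1 - \int_{\reals^d} \min\{p(x), q(x)\}\,dx$ for distributions with densities $p$ and $q$, and then compute the overlap integral explicitly by exploiting the product structure of the $\ell_\infty$ ball. Here $\uniform_r$ has density $p_r(x) = \frac{1}{(2r)^d}\ind\{x \in \ball{\infty}(0,r)\}$ supported on the axis-aligned cube $[-r,r]^d$, while $\uniform_r + v$ has density $p_r(x-v) = \frac{1}{(2r)^d}\ind\{x \in \ball{\infty}(0,r)+v\}$ supported on the translated cube $\prod_{i=1}^d [v_i - r, v_i + r]$. On the intersection of these two cubes both densities equal the common value $\frac{1}{(2r)^d}$, and off the intersection at least one of them vanishes, so $\min\{p_r(x), p_r(x-v)\} = \frac{1}{(2r)^d}\ind\{x \in C\}$ where $C = \ball{\infty}(0,r) \cap (\ball{\infty}(0,r)+v)$. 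Hence $\int_{\reals^d} \min\{p_r(x), p_r(x-v)\}\,dx = \vol(C)/(2r)^d$.

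Next I would compute $\vol(C)$ using the fact that $C$ is itself a box: $C = \prod_{i=1}^d \bigl([-r,r] \cap [v_i-r, v_i+r]\bigr) = \prod_{i=1}^d [\max\{-r, v_i-r\}, \min\{r, v_i+r\}]$. The $i$-th factor is a (possibly empty) interval of length $\max\{0, \min\{r, v_i+r\} - \max\{-r, v_i-r\}\}$, and a short case check on the sign of $v_i$ shows this equals $\max\{0, 2r - |v_i|\}$: when $v_i \ge 0$ the factor is $[v_i-r, r]$ of length $2r - v_i$ if $v_i \le 2r$ and empty otherwise, and symmetrically for $v_i < 0$. Therefore $\vol(C) = \prod_{i=1}^d \max\{0, 2r - |v_i|\}$, so dividing by $(2r)^d$ gives $\int \min\{p_r(x), p_r(x-v)\}\,dx = \prod_{i=1}^d \max\{0, 1 - \frac{|v_i|}{2r}\}$, and substituting into $\tv = 1 - \int\min$ yields the claimed formula.

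I do not anticipate a genuine obstacle; the only points to handle carefully are the degenerate cases. When some $|v_i| \ge 2r$ the product is $0$ and $\tv = 1$, consistent with the two cubes being disjoint in that coordinate; and when $r = 0$ the statement should be read with the convention that both sides equal $1$ unless $v = 0$. One could alternatively run the same computation through the $L^1$ form $\tv(P,Q) = \frac{1}{2}\int |p-q|$, but the $\min$-overlap route is cleanest precisely because both densities take only the single common value $\frac{1}{(2r)^d}$ on their supports, making $\int \min$ just a normalized volume.
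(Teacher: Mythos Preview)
Your proof is correct and follows essentially the same route as the paper: both reduce the total variation to the normalized volume of the intersection of the two cubes, $\vol\bigl(\ball{\infty}(0,r)\cap(\ball{\infty}(0,r)+v)\bigr)=\prod_{i=1}^d \max\{0,2r-|v_i|\}$. The paper uses the $\frac{1}{2}\int|p-q|$ form and passes through the symmetric difference $\vol(\cA\triangle\mathcal{B})=\vol(\cA)+\vol(\mathcal{B})-2\vol(\cA\cap\mathcal{B})$, whereas you use the equivalent overlap identity $\tv=1-\int\min\{p,q\}$ to go directly to $\vol(\cA\cap\mathcal{B})$; this is a cosmetic difference only.
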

\begin{proof}
  To simplify notation, let $\cA = \ball{\infty}(0, r)$ and $\mathcal{B} = \ball{\infty}(v,
  r)$. Since $\uniform_r$ has a density function, we can write the total
  variation distance as
  \begin{equation*}
  \begin{split}
  \tv(\uniform_r, \uniform_r + v)
  &= \frac{1}{2} \int_{\reals^d} |p_r(x) - p_r(x - v)| \, dx\\
  &= \frac{1}{2} (2r)^{-d} \int_{\reals^d} |\ind\{x \in \cA\} - \ind\{x \in \mathcal{B}\}| \, dx\\
  &= \frac{1}{2}(2r)^{-d} \vol(\cA \triangle \mathcal{B}),
  \end{split}
  \end{equation*}
  where $\cA \triangle \mathcal{B}$ denotes the symmetric difference of $\cA$ and $\mathcal{B}$. Since
  $\vol(\cA \triangle \mathcal{B}) = \vol(\cA) + \vol(\mathcal{B}) - 2 \vol(\cA \cap \mathcal{B})$, it is sufficient
  to calculate the volume of $\cA \cap \mathcal{B}$. The intersection is a hyper-rectangle
  with side length $\max\{0, 2r - |v_i|\}$ in dimension $i$. Therefore, the
  volume of the intersection is given by $\vol\bigl(\cA \cap \mathcal{B}) = \prod_{i=1}^d
  \max\{0, 2r - |v_i|\}$. Combined with the fact that $\vol(\cA) = \vol(\mathcal{B}) =
  (2r)^d$ this gives
  \begin{equation*}
  \begin{split}
  \tv(\uniform_r, \uniform_r + v)
  &= \frac{1}{(2r)^d} \left((2r)^d - \prod_i \max\{0, 2r - |v_i|\}\right)\\
  &= 1 - \prod_i \max \left\{0, 1 - \frac{|v_i|}{2r}\right\},
  \end{split}
  \end{equation*}
  as required.
\end{proof}

We can also compute the TV-distance for the worst-case shift $v$ with
$\norm{v}_\infty \leq \epsilon$.

\begin{corollary}
  For any $\epsilon \geq 0$, the vector $v = (\epsilon, \dots, \epsilon) \in
  \reals^d$ satisfies
  $$v \in \argmax_{v : \norm{v}_\infty \leq \epsilon}
  \tv(\uniform_r, \uniform_r + v),$$
  and $\tv(\uniform_r, \uniform_r + v) =
  \min\{1, 1 - (1 - \frac{\epsilon}{2r})^d\}$. Finally, for $\epsilon \in
  [0,r]$, we have
  \[
  1 - e^{-\frac{d \epsilon}{2r}}
  \leq \max_{v : \norm{v}_\infty \leq \epsilon} \tv(\uniform_r, \uniform_r + v)
  \leq 1 - 4^{-\frac{d \epsilon}{2r}}.
  \]
\end{corollary}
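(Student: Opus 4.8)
The entire statement follows by combining the preceding lemma---which gives the closed form $\tv(\uniform_r, \uniform_r + v) = 1 - \prod_{i=1}^d \max\{0, 1 - |v_i|/(2r)\}$ for every $v \in \reals^d$---with two elementary scalar inequalities. First I would handle the $\argmax$ claim: over the box $\{v : \norm{v}_\infty \le \epsilon\}$, maximizing the left-hand side is the same as minimizing $\prod_{i=1}^d \max\{0, 1 - |v_i|/(2r)\}$, and since each factor is nonnegative, depends only on the single coordinate $|v_i|$, and is nonincreasing in $|v_i|$, the product is minimized by pushing every $|v_i|$ up to its maximal value $\epsilon$; the vector $v = (\epsilon, \dots, \epsilon)$ realizes this. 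Substituting it into the lemma gives $\tv(\uniform_r, \uniform_r + v) = 1 - \bigl(\max\{0, 1 - \tfrac{\epsilon}{2r}\}\bigr)^d$, which coincides with $\min\{1, 1 - (1 - \tfrac{\epsilon}{2r})^d\}$: when $\epsilon \le 2r$ neither clamp is active and the two expressions are literally equal, and when $\epsilon > 2r$ the product is $0$ so both sides equal $1$.

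For the two-sided estimate I would assume $\epsilon \in [0,r]$ and write $t := \tfrac{\epsilon}{2r} \in [0, \tfrac12]$, so that the formula above simplifies (no clamping) to $\max_{\norm{v}_\infty \le \epsilon}\tv(\uniform_r, \uniform_r + v) = 1 - (1-t)^d$. The lower bound is immediate from $1 - t \le e^{-t}$: this gives $(1-t)^d \le e^{-td}$, hence $1 - (1-t)^d \ge 1 - e^{-td} = 1 - e^{-d\epsilon/(2r)}$. For the upper bound I need $1 - t \ge 4^{-t}$ on $[0,\tfrac12]$. This I would get from convexity of $\phi(t) = 4^{-t} = e^{-t \ln 4}$: writing $t = (1-2t)\cdot 0 + 2t \cdot \tfrac12$ as a convex combination, convexity yields $4^{-t} \le (1-2t)\phi(0) + 2t\,\phi(\tfrac12) = (1-2t) + 2t \cdot \tfrac12 = 1 - t$. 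Raising to the $d$-th power gives $(1-t)^d \ge 4^{-td}$, so $1 - (1-t)^d \le 1 - 4^{-td} = 1 - 4^{-d\epsilon/(2r)}$, as claimed.

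The only real point of care is the upper-bound inequality $4^{-t} \le 1-t$, which is valid exactly on $t \in [0,\tfrac12]$---equivalently $\epsilon \le r$---and fails beyond that, which is precisely why the hypothesis $\epsilon \in [0,r]$ is imposed; the chord/convexity argument settles it in one line and also exhibits the endpoints $\epsilon = 0$ and $\epsilon = r$ as the tight cases of this bound. Everything else in the corollary is routine substitution into the preceding lemma, so I do not anticipate any further obstacle.
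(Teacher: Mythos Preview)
Your argument is correct and follows essentially the same route as the paper: the $\argmax$ claim comes from the coordinate-wise decoupling of the product formula in the preceding lemma, and the two-sided estimate is obtained from the scalar inequalities $4^{-t} \le 1 - t \le e^{-t}$ for $t = \epsilon/(2r) \in [0,\tfrac12]$. Your convexity justification of $4^{-t} \le 1-t$ and your explicit check that $1 - (\max\{0,1-\epsilon/(2r)\})^d = \min\{1, 1-(1-\epsilon/(2r))^d\}$ are slightly more detailed than what the paper spells out, but nothing is substantively different.
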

\begin{proof}
  To see that $v = (\epsilon, \dots, \epsilon)$ is a maximizer, observe that the
  optimization problem decouples over the components $v_i$ and that to maximize
  the term corresponding to component $v_i$ we want to choose $|v_i|$ as large
  as possible. It follows that all vectors $v \in \{\pm \epsilon\}^d$ are
  maximizers.

  The bounds for when $\epsilon \in [0,r]$ follow from the fact that for any $z
  \in [0,\frac{1}{2}]$, we have $4^{-x} \leq 1 - z \leq e^{-z}$ applied with $z
  = 1 - \frac{\epsilon}{2r}$.
\end{proof}

\begin{corollary}
  Fix any dimension $d$, radius $\epsilon > 0$, and total variation bound
  $\delta \in [0,1]$. Setting $r = \frac{1}{2} \frac{\epsilon}{\delta} d
  \log(4)$ guarantees that for all $v \in \reals^d$ such that $\norm{v}_\infty
  \leq \epsilon$ we have $\tv(\uniform_r, \uniform_r + v) \leq \delta$.
  Moreover, if $\eta \sim \uniform_r$ then $\expect[\eta_i^2]$ is the variance
  of a uniform random variable on $[-r, r]$, which is
  $\frac{\sqrt{2}\log(4)^2}{48} \frac{\epsilon^2}{\delta^2} d^2 \leq
  \frac{\epsilon^2}{\delta^2} d^2$.
\end{corollary}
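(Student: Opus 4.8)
The plan is to obtain both assertions as immediate consequences of the exact total variation formula for the uniform distribution on an $\ell_\infty$-ball established just above, choosing $r$ precisely so that the worst-case total variation distance collapses to $\delta$. For the first assertion, I would invoke the preceding corollary, which gives $\max_{v:\norm{v}_\infty\le\epsilon}\tv(\uniform_r,\uniform_r+v)\le 1-4^{-d\epsilon/(2r)}$ whenever $\epsilon\in[0,r]$. The first thing to verify is that the prescribed $r=\tfrac12\tfrac{\epsilon}{\delta}d\log 4$ lies in this admissible regime, i.e. $\epsilon\le r$; this reduces to $d\log 4\ge 2\delta$, which holds for all $d\ge 2$ and $\delta\in[0,1]$, and for $d=1$ one can instead read the bound off the exact formula $\tv=\min\{1,\tfrac{\epsilon}{2r}\}$ directly. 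Substituting $r$ yields $\tfrac{d\epsilon}{2r}=\tfrac{\delta}{\log 4}$, hence $4^{-d\epsilon/(2r)}=e^{-\delta}$, and therefore $\tv(\uniform_r,\uniform_r+v)\le 1-e^{-\delta}$ for every $v$ with $\norm{v}_\infty\le\epsilon$. The first assertion then follows from the elementary inequality $1-e^{-t}\le t$ for all $t\ge 0$.

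For the ``moreover'' part, I would note that $\ball{\infty}(0,r)=[-r,r]^d$, so $\uniform_r$ factorizes as a product of $d$ independent uniforms on $[-r,r]$; in particular each marginal $\eta_i$ is uniform on $[-r,r]$ and hence has variance $r^2/3$. Substituting the prescribed $r$ then expresses $\expect[\eta_i^2]$ as an explicit numerical constant times $\tfrac{\epsilon^2}{\delta^2}d^2$, and since that constant is strictly less than $1$, we conclude $\expect[\eta_i^2]\le\tfrac{\epsilon^2}{\delta^2}d^2$, as claimed.

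There is essentially no obstacle here: the entire statement is a direct corollary of the exact total variation computation for $\uniform_r$ together with the estimates $4^{-z}\le 1-z\le e^{-z}$ (already used above) and $1-e^{-t}\le t$. The only point that calls for a moment of care is checking that the chosen $r$ places us in the regime $\epsilon\le r$ where the clean exponential upper bound on the worst-case total variation distance applies; in the residual (degenerate) case one falls back on the exact product formula from the preceding corollary, and everything else is routine arithmetic.
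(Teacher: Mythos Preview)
Your proposal is correct and follows essentially the same approach as the paper: invoke the upper bound $\tv \leq 1 - 4^{-d\epsilon/(2r)}$ from the preceding corollary and verify that the prescribed $r$ makes this at most $\delta$, then compute the variance of a uniform on $[-r,r]$. The paper's proof is a one-line sketch (``determine the smallest value of $r$ for which $1 - 4^{-d\epsilon/(2r)} \leq \delta$''); you supply the missing arithmetic, including the use of $1-e^{-\delta}\le\delta$ and the check that $\epsilon\le r$ so that the exponential bound applies, which the paper does not explicitly address.
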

\begin{proof}
  This follows by determining the smallest value of $r$ for which $1 -
  4^{-\frac{d\epsilon}{2\delta}} \leq \delta$.
\end{proof}

\section{Lower Bound on Noise $\ell_2$-Norm}

\begin{lemma}\label{thm:lowerBound 1st moment}
  Fix any $p \geq 2$ and let $\cD$ be a distribution on $\reals^d$ such that
  there exists a radius $\epsilon$ and total variation bound $\delta$ satisfying
  that for all $v \in \reals^d$ with $\norm{v}_p \leq \epsilon$ we have
  $\tv(\cD, \cD+v) \leq \delta$. Then
  \[
  \expect_{\eta \sim \cD} \norm{\eta}_2
  \geq \frac{\epsilon d^{1 - 1/p}}{24} \cdot \frac{1 - \delta}{\delta}.
  \]
\end{lemma}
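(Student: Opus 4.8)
The plan is to run the two-step strategy of Theorem~\ref{thm:lowerBound}, but with every second moment replaced by a first moment: first establish a one-dimensional first-moment analogue of Lemma~\ref{lem: one dimensional lower bound}, and then aggregate it over the $b \ge d/2$ orthogonal directions of Corollary~\ref{cor:badDirections}, using an $\ell_1$--$\ell_2$ comparison in place of the Pythagorean theorem (which is unavailable for first moments).

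For the one-dimensional step, given $\cD$ on $\reals$ with $\delta = \tv(\cD, \cD+\epsilon')$, I would show $\expect|\eta| \ge \frac{\epsilon'}{12}\cdot\frac{1-\delta}{\delta}$ by taking the maximum of two bounds. The first is the interval-covering bound of Lemma~\ref{lem: one dimensional lower bound small delta} verbatim, $\expect|\eta| \ge \frac{\epsilon'(1-\delta)^2}{8\delta}$, which is strong for small $\delta$. The second is a first-moment analogue of Lemma~\ref{lem: one dimensional lower bound big delta}: with $\eta' = \eta + \epsilon'$ and the disjoint intervals $\cA = (-\epsilon'/2, \epsilon'/2)$, $\mathcal{B} = \cA + \epsilon'$, Markov's inequality (rather than Chebyshev's) gives $\prob(\eta \in \cA) \ge 1 - 2\expect|\eta|/\epsilon'$, and since $\{\eta \in \cA\} = \{\eta' \in \mathcal{B}\}$ we get $\prob(\eta' \in \cA) \le 2\expect|\eta|/\epsilon'$, so $\delta \ge \prob(\eta \in \cA) - \prob(\eta' \in \cA) \ge 1 - 4\expect|\eta|/\epsilon'$, i.e. $\expect|\eta| \ge \frac{\epsilon'(1-\delta)}{4}$; this dominates when $\delta \ge 1/3$. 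Comparing the two at the crossover $\delta = 1/3$ shows $\max\bigl\{\frac{1-\delta}{4}, \frac{(1-\delta)^2}{8\delta}\bigr\} \ge \frac{1-\delta}{12\delta}$, which yields the claimed one-dimensional bound.

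For the $d$-dimensional step, scale the vectors of Corollary~\ref{cor:badDirections} by $\epsilon$ to obtain $b \ge d/2$ orthogonal $v_1, \dots, v_b$ with $\norm{v_i}_p = \epsilon$ and $\norm{v_i}_2 = \epsilon b^{1/2 - 1/p}$, so $\tv(\cD, \cD + v_i) \le \delta$. Let $\bQ \in \reals^{b \times d}$ have rows $v_i / \norm{v_i}_2$ and put $w = \bQ\eta \in \reals^b$; then $\norm{\eta}_2 \ge \norm{\bQ\eta}_2 = \norm{w}_2 \ge \norm{w}_1/\sqrt{b} = \frac{1}{\sqrt{b}}\sum_{i=1}^b |v_i^\top\eta|/\norm{v_i}_2$, the first inequality because projections are non-expansive and the second by Cauchy--Schwarz. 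Taking expectations and applying the one-dimensional bound to each $Z_i = v_i^\top\eta$ (whose law is translated by $\norm{v_i}_2^2$ with TV at most $\delta$, since TV does not increase under $x \mapsto v_i^\top x$) gives $\expect|v_i^\top\eta| \ge \frac{\norm{v_i}_2^2}{12}\cdot\frac{1-\delta}{\delta}$, hence $\expect\norm{\eta}_2 \ge \frac{\sqrt{b}\,\norm{v_i}_2}{12}\cdot\frac{1-\delta}{\delta} = \frac{\epsilon\, b^{1-1/p}}{12}\cdot\frac{1-\delta}{\delta}$. Finally $b \ge d/2$ and $2^{1-1/p} \le 2$ give $b^{1-1/p} \ge d^{1-1/p}/2$, producing the stated constant $\tfrac{1}{24}$.

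The main obstacle is obtaining the dependence $\frac{1-\delta}{\delta}$ rather than the weaker $\frac{(1-\delta)^2}{\delta}$: the interval-covering argument alone loses a factor $(1-\delta)$ when $\delta$ is close to $1$, so the Markov-based bound is genuinely needed in that regime; and since the Pythagorean identity used for Theorem~\ref{thm:lowerBound} does not apply to first moments, the comparison $\norm{w}_2 \ge \norm{w}_1/\sqrt{b}$ together with linearity of expectation is the correct substitute.
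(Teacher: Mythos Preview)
Your proposal is correct and follows essentially the same route as the paper's proof: the one-dimensional step combines the Markov-based bound $\expect|\eta|\ge\frac{\epsilon'(1-\delta)}{4}$ with Lemma~\ref{lem: one dimensional lower bound small delta} via the same $\max$-inequality to get $\frac{\epsilon'}{12}\cdot\frac{1-\delta}{\delta}$, and the $d$-dimensional step uses the orthogonal vectors of Corollary~\ref{cor:badDirections}, the projection $\bQ$, and the $\ell_1$--$\ell_2$ comparison $\norm{\bQ\eta}_2\ge\norm{\bQ\eta}_1/\sqrt{b}$ in exactly the way the paper does. The constants and the final reduction $b^{1-1/p}\ge d^{1-1/p}/2$ also match.
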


\begin{proof}
We first prove the lemma in the one-dimensional case.
Let $\eta' = \eta + \epsilon\in\reals$ so that $\eta'$ is a sample from $\cD +
  \epsilon$ and define $r = \epsilon / 2$ so that the sets $\mathcal{A} =
  (-r,r)$ and $\mathcal{B} = (\epsilon - r, \epsilon + r)$ are disjoint. From
  Markov's inequality, we have that $\prob(\eta \in \mathcal{A}) = 1 -
  \prob(|\eta| \geq r) \geq 1 - \frac{\expect |\eta|}{r}$. Further, since $\eta'
  \in \mathcal{B}$ if and only if $\eta \in \mathcal{A}$, we have $\prob(\eta'
  \in \mathcal{B}) \geq 1 - \frac{\expect |\eta|}{r}$. Next, since $\mathcal{A}$
  and $\mathcal{B}$ are disjoint, it follows that $\prob(\eta' \in \mathcal{A})
  \leq 1 - \prob(\eta' \in \mathcal{B}) \leq 1 - 1 + \frac{\expect |\eta|}{r} =
  \frac{\expect |\eta|}{r}$. Finally, we have $\delta \geq \prob(\eta \in
  \mathcal{A}) - \prob(\eta' \in \mathcal{A}) \geq 1 - \frac{2 \expect
  |\eta|}{r} = 1 - \frac{4 \expect |\eta|}{\epsilon}$. Rearranging this
  inequality proves the claim that $\mathbb{E}|\eta|\ge (1-\delta)\epsilon/4$. Combining with Lemma \ref{lem: one dimensional lower bound small delta}, we obtain that $\expect |\eta|
  \geq \frac{\epsilon}{12} \cdot \frac{1-\delta}{\delta}$, due to the fact that for any $\delta \in (0,1]$ we have
$\max\{\frac{1-\delta}{4}, \frac{(1-\delta)^2}{8\delta}\} \geq \frac{1}{12}
\cdot \frac{1-\delta}{\delta}$.

We now prove the $d$-dimensional case. Let $\eta$ be a sample from $\cD$. By scaling the vectors from
  Corollary \ref{cor:badDirections} by $\epsilon$, we obtain $b > d/2$ vectors $v_1,
  \dots, v_b \in \reals^d$ with $\norm{v_i}_p = \epsilon$ and $\norm{v_i}_2 =
  \epsilon \cdot b^{1/2 - 1/p}$. By assumption we must have $\tv(\cD, \cD + v_i)
  \leq \delta$, since $\norm{v_i}_p \leq \epsilon$, and the above-mentioned one-dimensional case
  implies that $\expect \frac{|v_i^\top \eta|}{\norm{v_i}_2} \geq
  \frac{\norm{v_i}_2}{12} \frac{1-\delta}{\delta}$ for each $i$. We use
  this fact to bound $\expect \norm{\eta}_2$.

  Let $\bQ \in \reals^{b \times d}$ be the matrix whose $i^\text{th}$ row is
  given by $v_i / \norm{v_i}_2$ so that $\bQ$ is the orthogonal projection
  matrix onto the subspace spanned by the vectors $v_1, \dots, v_b$. Then we
  have
  $$ \expect \norm{\eta}_2 \geq \expect \norm{\bQ \eta}_2\geq \frac{1}{\sqrt{b}}
  \expect \norm{\bQ \eta}_1= \frac{1}{\sqrt{b}} \sum_{i=1}^b \expect
  \frac{|v_i^\top \eta|}{\norm{v_i}_2}\geq \frac{1}{\sqrt{b}} \sum_{i=1}^b
  \frac{\norm{v_i}_2}{12} \frac{1-\delta}{\delta},$$
  where the first inequality follows because orthogonal projections are
  non-expansive, the second inequality follows from the equivalence of $\ell_2$
  and $\ell_1$ norms, and the last inequality follows from $\expect \frac{|v_i^\top \eta|}{\norm{v_i}_2} \geq
  \frac{\norm{v_i}_2}{12} \frac{1-\delta}{\delta}$.
  Using the fact that $\norm{v_i}_2 = \epsilon \cdot b^{1/2 - 1/p}$, we have
  that $\expect \norm{\eta}_2 \geq \frac{\epsilon b^{1 - 1/p}}{12} \cdot \frac{1
  - \delta}{\delta}$. Finally, since $b > d/2$ and $(1/2)^{1 - 1/p} \geq 1/2$
  for $p \geq 2$, we have $\expect \norm{\eta}_2 \geq \frac{\epsilon d^{1 -
  1/p}}{24} \cdot \frac{1 - \delta}{\delta}$, as required.
\end{proof}

\bibliography{references}

\end{document}